\documentclass[preprint,12pt]{article}

\usepackage[top=1.1in, bottom=1.2in, left=0.8in, right=0.8in]{geometry}
\usepackage{amsmath,amssymb,amsthm,mathtools,bm}
\usepackage{booktabs}
\usepackage{graphicx}
\usepackage{microtype}
\usepackage{xcolor}
\usepackage{float}
\usepackage{hyperref}
\hypersetup{
  colorlinks=false,
  pdfborder={0 0 1},
  linkbordercolor={1 0 0},
  citebordercolor={0 1 0},
  urlbordercolor={0 1 1}
}

\newtheorem{theorem}{Theorem}[section]
\newtheorem{proposition}[theorem]{Proposition}
\newtheorem{lemma}[theorem]{Lemma}
\newtheorem{corollary}[theorem]{Corollary}

\newcommand{\R}{\mathbb R}
\newcommand{\E}{\mathbb E}
\newcommand{\Pp}{\mathbb P}
\newcommand{\Law}{\mathcal L}
\newcommand{\KL}{\mathrm{KL}}
\newcommand{\TV}{\mathrm{TV}}
\newcommand{\eps}{\varepsilon}

\title{Score Accuracy Along the Forward Diffusion Does Not Certify Numerical Stability in Diffusion Sampling}

\author{
Yiwei Zhou\thanks{Email: yiwei.zhou@utexas.edu}
}

\date{}

\begin{document}
\maketitle

\begin{abstract}
Score matching controls average error under the forward marginals, but a discretized reverse-time sampler evaluates the learned score along its own trajectory.  We show that small forward-marginal error does not guarantee numerical stability.  We construct a single smooth score field with arbitrarily small forward-marginal $L^2$ error.  The learned reverse-time process is nonexplosive, has moments of every order, and can be arbitrarily close to the exact reverse-time process in path-space total variation.  Yet its Euler--Maruyama discretizations converge in probability while every positive moment diverges.  Thus weak convergence can hold even though every Wasserstein distance $W_p$, $p\ge1$, diverges.

The same failure can occur within one fixed finite neural architecture.  We construct a family of bounded, globally Lipschitz denoisers for which both the forward-marginal error and the path-space total variation distance tend to zero, while their Euler--Maruyama endpoints diverge in every $W_p$.  For compactly supported data, we also give a simple positive result.  Projecting the learned denoiser onto a known bounded closed convex set containing the support preserves pointwise accuracy, gives grid-uniform moment bounds, and yields Wasserstein convergence under mild local regularity.  Experiments with a small fixed DiT-style network show large growth along rare numerical trajectories and its suppression by denoiser projection, while overall trajectory errors remain small.

\medskip
\noindent\textbf{Keywords:} score-based diffusion models; reverse-time sampling; numerical stability; Euler--Maruyama discretization; Wasserstein convergence; denoiser projection.

\smallskip
\noindent\textbf{MSC 2020:} Primary 65C30; Secondary 60H35, 60J60, 68T07.
\end{abstract}

\section{Introduction}

In score-based diffusion models, the exact reverse-time process uses the true score \(s_t\) in its drift.  Replacing \(s_t\) by a learned approximation \(\widehat s_t\) gives the learned reverse-time process, which is then  Euler--Maruyama discretized for sampling.

A standard score-learning error has the form
\begin{equation}
\mathcal E_{\mathrm{score}}
=
\int_\delta^T
\E_{X_t\sim p_t}
\|\widehat s_t(X_t)-s_t(X_t)\|^2\,dt.
\label{eq:score-budget}
\end{equation}
We call it \emph{on-path} score error because it is measured under the forward marginals.  A natural question is:

``Suppose both the exact and learned reverse-time processes are stable, and the learned process can be made arbitrarily close to the exact one in path-space total variation. Is an arbitrarily small on-path score error then enough to guarantee stability of the discretization?"

Perhaps surprisingly, the answer is no.  We construct a single smooth score field with arbitrarily small on-path error.  The learned reverse-time process is nonexplosive, has moments of every order, and can be arbitrarily close to the exact reverse-time process in path-space total variation.  Yet its Euler--Maruyama endpoints converge in probability while every positive moment diverges.  Weak convergence holds while the Wasserstein distance $W_p$ diverges for every $p\ge1$.  The same mechanism extends to arbitrary dimension and to probability-flow ODEs.

Furthermore, this error alone does not rank samplers by numerical stability.  Two smooth score fields can have the same arbitrarily small on-path error while one Euler--Maruyama discretization has grid-uniform moments of every order and the other loses every positive moment.  The same obstruction persists even within a fixed finite GELU architecture: bounded, globally Lipschitz denoisers can have arbitrarily small on-path error and path-space total variation distance while their Euler--Maruyama endpoints diverge in every $W_p$.

The mechanism is simple.  We place an inward superlinear perturbation in a remote region carrying very little mass under the forward marginals.  The learned reverse-time process rarely reaches this region, so its path law remains close to that of the exact reverse-time process.  A rare numerical trajectory can nevertheless enter it, after which explicit updates trigger repeated superlinear amplification.  The growth along these rare trajectories outweighs their probability cost and destroys every positive moment.  This mechanism is related to the classical instability of explicit Euler schemes for superlinear SDEs \cite{hjk2011}, but here the unstable region carries negligible weight under the on-path score error while the learned reverse-time process remains stable.

We also give a simple positive result.  Suppose the data are supported in a known bounded closed convex set \(C\).  Since the exact denoiser \(D_t(x)=\E[X_0\mid X_t=x]\) lies in \(C\), we project the learned denoiser onto \(C\) before forming the score.  This projection cannot increase the pointwise denoiser or score error, while making the deterministic part of each Euler--Maruyama update pull toward a bounded set.  The projected chain has grid-uniform moment bounds, so weak numerical convergence implies Wasserstein convergence. For image data scaled to a box, this projection coincides with clipping the predicted clean sample, a heuristic used in image diffusion models \cite{saharia2022imagen,lou2023reflected}.  Our result provides a stability interpretation of this practice.

Related work studies diffusion sampling under distribution-weighted score error and additional assumptions that yield path-space or endpoint guarantees \cite{chen2022minimal,lee2022general,benton2024linear}.  Wasserstein convergence has also been established under assumptions on the data law, score regularity, or dynamics \cite{gao2023wasserstein,brunosabanis2025}, while contraction has been used as a direct stability principle for reverse-time sampling \cite{tangzhao2024contractive}.  Recent work has questioned what scalar score-error measures can certify \cite{cao2026robustness,khelifa2026gradients}.  Our results identify a different obstruction: even when the learned continuous-time dynamics are stable and close to the exact process, explicit discretization can lose all moment and Wasserstein stability because numerical trajectories may move into regions poorly represented by the forward marginals.

\section{Setup}

We work with the variance-exploding (VE) Gaussian noising process
\begin{equation*}
X_t=X_0+\sqrt{t}\,g,
\qquad g\sim N(0,I_d),
\qquad t\in[0,T].
\end{equation*}
Let $p_t$ denote the law of $X_t$.  Tweedie's identity gives
\begin{equation*}
s_t(x)
=\nabla\log p_t(x)
=\frac{D_t(x)-x}{t},
\qquad
D_t(x)=\E[X_0\mid X_t=x].
\end{equation*}
For a learned score $\widehat s_t$, we use the associated denoiser parameterization
\[
\widehat D_t(x)=x+t\widehat s_t(x).
\]
When $X_0$ is supported in a compact set $\mathcal K\subset B(0,M)$, let
\[
C_0=\overline{\operatorname{conv}}(\mathcal K).
\]
Conditional expectation then implies
\begin{equation*}
D_t(x)\in C_0,
\qquad
\|s_t(x)\|\le \frac{M+\|x\|}{t}.
\end{equation*}
Thus, for bounded data and $t\ge\delta>0$, the exact score grows at most linearly in space.  Our concern is different: an approximate score $\widehat s_t$ may be poorly controlled in regions that carry little mass under the forward marginals.

We consider a decreasing grid
\[
T=t_K>t_{K-1}>\cdots>t_0=\delta>0,
\qquad
h_k=t_{k+1}-t_k,
\qquad
\rho_k=\frac{h_k}{t_{k+1}}.
\]
Because the grid is strictly decreasing and positive, every step satisfies $0<\rho_k<1$.
A simple reverse Euler step driven by a score field $s$ has the form
\begin{equation}
Y_k=Y_{k+1}+h_k s_{t_{k+1}}(Y_{k+1})+\sqrt{h_k}\,\xi_k,
\label{eq:reverse-em}
\end{equation}
with independent standard Gaussian $\xi_k$.  The chain runs from $k=K$ ($t=T$) down to $k=0$ ($t=\delta$), and $Y_0$ is the generated sample.

\paragraph{Notation.}  $X$ denotes the forward noising process, $U$ the exact reverse-time process, $Z$ the learned reverse-time process, and $Y$ an Euler--Maruyama chain.  The exact, learned, and projected scores are $s$, $\widehat s$, and $s^P$; the corresponding denoisers are $D$, $\widehat D$, and $D^P$.  For a random variable or process $V$, $\Law(V)$ denotes its law.  We write $\TV(\nu,\mu)$ for total variation distance, $\KL(\nu\|\mu)$ for relative entropy, and $\mathcal P_p(\R^d)$ for the probability measures on $\R^d$ with finite $p$th moment.

\section{The blind spot of on-path score accuracy}

Since $p_t$ has rapidly decaying tails, the score can be changed far outside the typical region while adding only an arbitrarily small amount to the $L^2(p_t)$ budget.  The main difficulty is to turn this weak visibility into numerical instability.  A one-step perturbation of amplitude $a_R$ at radius $R$ pays essentially the same rare-event factor in both the score-error budget and the resulting moment error: under a Gaussian-tailed forward marginal, the budget contribution is of order
\[
a_R^2 e^{-cR^2}.
\]
The separation therefore needs repeated evaluation.  Once an Euler trajectory enters the perturbed tail, successive explicit updates can compound superlinear growth faster than the entry probability decays.

\subsection{A fixed score field with weak convergence and Wasserstein divergence}

We fix the learned score and time horizon and vary only the grid resolution.

\begin{theorem}[Weak convergence with Wasserstein divergence]
\label{thm:strong-noncert}
Let $\mu_0$ be any probability law supported in $[-M,M]$, let
$p_t=\mu_0*N(0,t)$, and fix $0<\delta<T<\infty$.  For every
$\eps>0$, $\eta>0$, $\lambda>0$, and $\alpha>0$, there exist $R>0$ and an even cutoff
$\chi_R\in C^\infty(\R;[0,1])$, with $\chi_R=0$ on $[-R,R]$ and
$\chi_R=1$ outside $[-R-1,R+1]$, such that the single score field
\begin{equation}
\widehat s_t(x)=s_t(x)-\lambda\chi_R(x)x|x|^\alpha,
\qquad t\in[\delta,T],
\label{eq:inward-superlinear-corruption}
\end{equation}
has all of the following properties.
\begin{enumerate}
\item \emph{On-path accuracy.} Its error under the forward marginals is uniformly small in time:
\begin{equation}
\sup_{t\in[\delta,T]}
\E_{p_t}|\widehat s_t-s_t|^2\le\eps.
\label{eq:small-score-error-main}
\end{equation}
\item \emph{Stable and nearly exact continuous dynamics.} There exist constants $c,C>0$ such that, uniformly in
$t\in[\delta,T]$,
\begin{equation}
x\widehat s_t(x)\le C-c|x|^{2+\alpha},
\qquad x\in\R.
\label{eq:global-coercive-score}
\end{equation}
Consequently the learned reverse-time process
\begin{equation}
dZ_u
=
\widehat s_{T-u}(Z_u)\,du+dB_u,
\qquad
Z_0\sim p_T,
\qquad
0\le u\le T-\delta,
\label{eq:learned-continuous-reverse}
\end{equation}
has a unique nonexplosive strong solution and, for every $q\ge2$,
\begin{equation}
\sup_{0\le u\le T-\delta}\E|Z_u|^q<\infty.
\label{eq:continuous-moment-stability}
\end{equation}
Let $U$ denote the exact reverse-time process
\begin{equation*}
dU_u=s_{T-u}(U_u)\,du+dB_u,
\qquad U_0\sim p_T,
\end{equation*}
driven by the same Brownian normalization and terminal law.  The two continuous path laws can be made arbitrarily close:
\begin{equation}
\TV\!\left(\Law\bigl((Z_u)_{0\le u\le T-\delta}\bigr),
\Law\bigl((U_u)_{0\le u\le T-\delta}\bigr)
\right)\le\eta.
\label{eq:path-law-tv-close}
\end{equation}
\item \emph{Discrete moment failure despite weak convergence.} Let $K_m=2^m$ and use the nested geometric grids
\begin{equation}
t_j^{(K_m)}
=
\delta\exp\!\left(\frac{j}{K_m}\log\frac{T}{\delta}\right),
\qquad
j=0,\ldots,K_m.
\label{eq:nested-geometric-grid}
\end{equation}
Initialize $Y_{K_m}^{(K_m)}\sim p_T$ and apply the Euler--Maruyama recursion
\eqref{eq:reverse-em} with the same field \eqref{eq:inward-superlinear-corruption} on every grid.  Couple the Euler noises to the Brownian increments of \eqref{eq:learned-continuous-reverse}.  Then
\begin{equation}
Y_0^{(K_m)}\longrightarrow Z_{T-\delta}
\qquad\text{in probability}.
\label{eq:euler-in-probability}
\end{equation}
Nevertheless, for every $q>0$,
\begin{equation}
\E|Y_0^{(K_m)}|^q\longrightarrow\infty
\qquad\text{as }m\to\infty.
\label{eq:all-moment-divergence}
\end{equation}
Consequently the Euler laws converge weakly to the learned endpoint, but for every $p\ge1$ and every fixed $\nu\in\mathcal P_p(\R)$,
\begin{equation}
W_p\bigl(\Law(Y_0^{(K_m)}),\nu\bigr)\longrightarrow\infty.
\label{eq:wp-any-target-divergence}
\end{equation}
In particular this holds for $\nu=\Law(Z_{T-\delta})$, $\nu=p_\delta$, and $\nu=\mu_0$.
\end{enumerate}
\end{theorem}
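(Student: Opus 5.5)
The proof handles the three items separately and keeps $\lambda,\alpha$ fixed. The only free parameter is $R$, chosen large at the end. The cutoff $\chi_R$ is a translate of one fixed smooth even profile, so its derivatives are bounded uniformly in $R$.

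\emph{On-path accuracy.} We have $|\widehat s_t-s_t|^2=\lambda^2\chi_R(x)^2|x|^{2+2\alpha}$, which vanishes on $[-R,R]$. Since $\mu_0$ is supported in $[-M,M]$ and $t\le T$, the density satisfies $p_t(x)\le (2\pi\delta)^{-1/2}e^{-(|x|-M)^2/(2T)}$ for $|x|>M$. The error is therefore bounded by $\lambda^2\int_{|x|>R}|x|^{2+2\alpha}(2\pi\delta)^{-1/2}e^{-(|x|-M)^2/(2T)}dx$, uniformly in $t$, and this tends to $0$ as $R\to\infty$.

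\emph{Continuous dynamics.} From the Tweedie bound, $x s_t(x)\le (M|x|+x^2)/\delta$. For $|x|\ge R+1$ the perturbation contributes $-\lambda|x|^{2+\alpha}$, which dominates the quadratic term. This gives \eqref{eq:global-coercive-score} with $c=\lambda/2$ and a suitable $C=C(R)$, since the left side is bounded on compacts. The score $s_t$ is smooth and locally Lipschitz in $x$ (the Gaussian convolution of a compactly supported law) and continuous in $t$, so $\widehat s$ is locally Lipschitz. The standard Khasminskii argument with $V=x^q$, using Itô's formula and the coercivity, then gives nonexplosion, pathwise uniqueness and \eqref{eq:continuous-moment-stability}.

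For the TV estimate, let $\tau_R$ be the first time either process leaves $[-R,R]$. Both equations are driven by the same Brownian motion and the same initial value, and their drifts agree inside $[-R,R]$. By pathwise uniqueness, $Z=U$ on $[0,\tau_R]$. Hence the TV distance is at most $\Pp(\sup_u|U_u|\ge R)$. The exact reverse process $U$ has the forward marginals and continuous paths, so this probability tends to $0$ as $R\to\infty$; one can bound it using $\sup$ moments of $U$, for instance via the time reversal of $X$ and $\sup_t|X_0+B_t|$. Choose $R$ so that both bounds hold.

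\emph{Discrete behaviour.} Convergence in probability follows from the standard localisation argument (Higham--Mao--Stuart type) for locally Lipschitz drifts with a nonexplosive limit. On the event $\{\sup|Z|\le L\}$, stop at the exit of a ball of radius $L+1$; there the Euler scheme with a globally Lipschitz truncated drift converges strongly, and the probability of the complement tends to $0$ as $L\to\infty$.

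The main obstacle is the moment blow-up \eqref{eq:all-moment-divergence}. I would mimic Hutzenthaler--Jentzen--Kloeden. On the geometric grid, $h_k\approx t_{k+1}\log(T/\delta)/K_m$, so all steps are comparable to $h\sim 1/K$. Consider the event that at the first reverse step the noise $\sqrt{h}\,\xi$ pushes $Y$ to $|y|\ge x_*:=(A/(\lambda h))^{1/\alpha}$ with a large constant $A$. Its probability is at least $e^{-c' x_*^2/h}=\exp(-c'' K^{1+2/\alpha})$.

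On this event, for $|y|\ge x_*$ one step gives $|y'|\ge \lambda h|y|^{1+\alpha}-|y|-h(M+|y|)/\delta-\sqrt h|\xi|\ge |y|^{1+\alpha}\lambda h/2$. Moreover $y'$ changes sign but stays in the perturbed region, since $\chi_R$ is even and equals $1$ there. Intersect with the event $|\xi_k|\le1$ for all later steps, which has probability at least $c^K$. Iterating, $|Y_0|\ge (x_*)^{(1+\alpha/2)^{K-1}}$, which is doubly exponential in $K$. Then $\E|Y_0|^q\ge \exp\bigl(q(1+\alpha/2)^{K-1}\log x_*-c''K^{1+2/\alpha}-K\log(1/c)\bigr)\to\infty$ for every $q>0$, because doubly exponential growth beats polynomial-exponential probability cost.

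The Wasserstein divergence \eqref{eq:wp-any-target-divergence} follows for $p\ge1$. Indeed $W_p(\Law(Y),\nu)\ge W_1(\Law(Y),\delta_0)-W_1(\nu,\delta_0)=\E|Y|-\E_\nu|x|$, where the right side is finite because $\nu\in\mathcal P_p$. Since $\E|Y|\to\infty$, the distance diverges.
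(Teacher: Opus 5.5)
Your proposal is correct and follows the paper's overall skeleton: a Gaussian tail bound for the on-path error, coercivity plus a Lyapunov function, the common-core coupling on $[-R,R]$ for the total variation bound, localisation to a globally Lipschitz truncated drift for convergence in probability, and a rare event followed by a superlinear cascade for the moments.

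The one substantive difference is where the rare event comes from. The paper never uses the noise to enter the tail. It draws the terminal state directly from the shell $[AK^{1/\alpha},AK^{1/\alpha}+1]$ under $p_T$, whose Gaussian tail has variance $T$, so the entry cost is only $\exp(-CK^{2/\alpha})$. It then imposes $|\xi_j|\le1$ at every step. You instead use the Hutzenthaler--Jentzen--Kloeden device of a single Gaussian kick of size $x_*\asymp K^{1/\alpha}$ at variance $h\asymp1/K$, which costs $\exp(-cK^{1+2/\alpha})$.

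Your version does not use the tail of the initialization, so it works for any initial law, even a deterministic one. Since $\Pp(|m+\sqrt h\,\xi|\ge x_*)\ge\Pp(\sqrt h\,\xi\ge x_*)$ for every $m\in\R$, you also need no control of the first drift term. The paper's version is cheaper and reuses its lower bound on $p_T$. Both costs are $\exp(-\mathrm{poly}(K))$ and are swamped by the doubly exponential amplification, so either route works.

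There are three small repairs.

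\begin{enumerate}
\item The bound $|Y_0|\ge x_*^{(1+\alpha/2)^{K-1}}$ does not follow by directly iterating $|y'|\ge\frac{\lambda h}{2}|y|^{1+\alpha}$, because $\frac{\lambda h}{2}|y|^{\alpha/2}<1$ when $|y|$ is near $x_*$. Normalise instead by $z=|y|/x_*$. This gives $z'\ge\frac A2 z^{1+\alpha}$, hence $|Y_0|\ge x_*(A/2)^{((1+\alpha)^{K-1}-1)/\alpha}$ for $A>2$. This is the paper's normalisation $z_j=|Y_j|/K^{1/\alpha}$.
\item Define $x_*$ using the smallest step on the geometric grid, so that $\lambda h_j x_*^\alpha\ge A$ holds on every step.
\item Use $(1+x^2)^{q/2}$ rather than $x^q$ as the Lyapunov function, since $x^q$ is not defined for negative $x$ and non-integer $q$.
\end{enumerate}
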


\begin{proof}
\medskip\noindent\textbf{Step 1: make the on-path error small.}\quad
The exact score satisfies the global linear envelope
\begin{equation}
|s_t(x)|
=\frac{|D_t(x)-x|}{t}
\le\frac{|x|+M}{\delta},
\qquad
(x,t)\in\R\times[\delta,T].
\label{eq:exact-linear-envelope-proof}
\end{equation}
Under the coupling $X_t=X_0+\sqrt t g$ with $|X_0|\le M$ almost surely,
\begin{align*}
\sup_{t\in[\delta,T]}
\E_{p_t}|\widehat s_t-s_t|^2
&\le
\lambda^2
\sup_{t\in[\delta,T]}
\E\bigl[|X_t|^{2+2\alpha}\mathbf 1_{\{|X_t|>R\}}\bigr]\nonumber\\
&\le
\lambda^2
\E\Bigl[(M+\sqrt T|g|)^{2+2\alpha}
\mathbf 1_{\{M+\sqrt T|g|>R\}}\Bigr].
\end{align*}
The last expression tends to zero as $R\to\infty$ by dominated convergence.  We can therefore choose $R$ so that \eqref{eq:small-score-error-main} holds.

\medskip\noindent\textbf{Step 2: control the continuous-time process.}\quad
For $|x|\ge R+1$, the cutoff is fully active, and
\eqref{eq:exact-linear-envelope-proof} gives
\[
x\widehat s_t(x)
\le
\frac{|x|^2+M|x|}{\delta}-\lambda|x|^{2+\alpha}.
\]
As $|x|\to\infty$, the negative superquadratic term dominates the quadratic and linear terms.  Hence there is $R_0<\infty$ such that, uniformly in $t\in[\delta,T]$,
\[
x\widehat s_t(x)
\le
-\frac{\lambda}{2}|x|^{2+\alpha},
\qquad |x|\ge R_0.
\]
On the compact set $[\delta,T]\times[-R_0,R_0]$, the continuous function
\[
(t,x)\longmapsto
x\widehat s_t(x)+\frac{\lambda}{4}|x|^{2+\alpha}
\]
is bounded above.  Enlarging the constant if necessary therefore gives
\eqref{eq:global-coercive-score} for suitable $c,C>0$.

We next check local well-posedness.  For every $t\ge\delta$, $p_t$ is a positive Gaussian convolution, so the exact score $s_t=\partial_x\log p_t$ is smooth on $[\delta,T]\times\R$.  The cutoff $\chi_R$ vanishes near the origin, so $\chi_R(x)x|x|^\alpha$ is smooth even when $\alpha$ is nonintegral.  Thus $\widehat s_t$ is smooth.  The reverse drift $b(u,x)=\widehat s_{T-u}(x)$ is therefore locally Lipschitz in $x$, uniformly in $u\in[0,T-\delta]$: for every $H<\infty$,
\[
\sup_{\substack{0\le u\le T-\delta\\ |x|\le H}}
|\partial_x b(u,x)|<\infty.
\]
Hence the learned reverse-time process admits a unique strong solution up to its
explosion time.

For the time-inhomogeneous generator
\[
\mathcal L_u f(x)
=
b(u,x)f'(x)+\frac12 f''(x),
\]
consider $V_m(x)=(1+x^2)^m$.  Using \eqref{eq:global-coercive-score}, the drift term satisfies
\[
2m(1+x^2)^{m-1}x\,b(u,x)
\le
2m(1+x^2)^{m-1}(C-c|x|^{2+\alpha}).
\]
The diffusion term $\frac12V_m''(x)$ has lower polynomial order, so for every integer $m\ge1$ there is $C_m<\infty$, uniform in $u\in[0,T-\delta]$, such that
\[
\mathcal L_u V_m(x)\le C_m V_m(x).
\]
For $m=1$, apply It\^o's formula up to the exit time from $[-n,n]$ and then let $n\to\infty$.  The resulting bound rules out explosion.  The same stopped argument for general $m$, followed by Gr\"onwall's inequality, gives uniform bounds for all even moments on the finite horizon.  Since $p_T$ has moments of every order, interpolation gives \eqref{eq:continuous-moment-stability} for every real $q\ge2$.

\medskip\noindent\textbf{Step 3: compare the learned and exact path laws.}\quad
Next, couple the learned and exact reverse-time processes with the same initial draw from $p_T$ and the same Brownian motion.  Because $\widehat s_t=s_t$ on $[-R,R]$, pathwise uniqueness implies that the two solutions agree until their first exit from this interval.  The coupling inequality therefore gives
\begin{equation*}
\TV\!\left(\Law((Z_u)_{u\le T-\delta}),
\Law((U_u)_{u\le T-\delta})
\right)
\le
\Pp\!\left(\sup_{0\le u\le T-\delta}|U_u|\ge R\right).
\end{equation*}
The exact reverse-time process has continuous paths on a compact time interval, so the probability on the right tends to zero as $R\to\infty$.  We may therefore enlarge the radius chosen in Step~1 until this probability is at most $\eta$.  Enlarging $R$ can only decrease the on-path error bound, while the coercivity argument of Step~2 still applies with new constants.  This gives \eqref{eq:path-law-tv-close}.

\medskip\noindent\textbf{Step 4: prove Euler convergence in probability.}\quad
Write $b(u,x)=\widehat s_{T-u}(x)$.  Let
$\pi_H(x)=(-H)\vee x\wedge H$ and define the truncated drift
$b_H(u,x)=b(u,\pi_H(x))$.  Since $b$ is smooth on
$[0,T-\delta]\times[-H,H]$, the truncated drift is globally Lipschitz in
space, uniformly in time, and uniformly continuous in time on the compact
horizon.  Let $Z^H$ and $Y^{H,K}$ be the corresponding
SDE and Euler approximation, coupled with the same initial condition and
Brownian increments.  Standard Euler convergence for globally Lipschitz,
time-inhomogeneous drifts on meshes with vanishing maximal step gives \cite{highammaostuart2002}
\begin{equation}
\max_{0\le j\le K}
\left|Y_{K-j}^{H,K}-Z^H_{u_j}\right|
\longrightarrow0
\quad\text{in probability},
\label{eq:truncated-euler-convergence}
\end{equation}
where $u_j=T-t_{K-j}$ is the corresponding increasing-time grid.

Fix $\zeta>0$.  By nonexplosion and continuity of $Z$, choose $H$ so large that
\[
\Pp\!\left(\sup_{u\le T-\delta}|Z_u|\ge H-1\right)<\zeta.
\]
On the complementary event, $Z^H=Z$.  By \eqref{eq:truncated-euler-convergence}, the truncated Euler path stays within distance $1$ of $Z^H$ with probability tending to at least $1-\zeta$.  It then remains inside $[-H,H]$, so the raw and truncated Euler recursions agree step by step.  Thus, for every $r>0$,
\[
\limsup_{K\to\infty}
\Pp\bigl(|Y_0^{(K)}-Z_{T-\delta}|>r\bigr)
\le \zeta.
\]
Since $\zeta$ is arbitrary, this proves \eqref{eq:euler-in-probability}.

\medskip\noindent\textbf{Step 5: amplify a rare terminal shell.}\quad
It remains to prove divergence of the explicit moments.  Put
$\ell=\log(T/\delta)$.  For the grid \eqref{eq:nested-geometric-grid},
\[
h_j=t_{j+1}-t_j=t_j(e^{\ell/K}-1).
\]
For all sufficiently large $K$ there are constants $a,b>0$, depending only on $(\delta,T)$, such that
\begin{equation}
\frac aK\le h_j\le\frac bK,
\qquad j=0,\ldots,K-1.
\label{eq:geometric-step-comparison}
\end{equation}
Consider one reverse Euler step from $Y_{j+1}=y$ with $|y|\ge R+1$.
On the event $|\xi_j|\le1$, \eqref{eq:exact-linear-envelope-proof} and
\eqref{eq:geometric-step-comparison} give
\begin{align*}
|Y_j|
&\ge
\lambda h_j|y|^{1+\alpha}-|y|-h_j|s_{t_{j+1}}(y)|-\sqrt{h_j}|\xi_j|\\
&\ge
\frac{\lambda a}{K}|y|^{1+\alpha}
-\left(1+\frac{b}{\delta K}\right)|y|
-\frac{bM}{\delta K}
-\sqrt{\frac bK}.
\end{align*}
Choose $A>0$, depending only on $\lambda,\alpha,a,b,\delta,M$, so large that for every sufficiently large $K$ and every $|y|\ge A K^{1/\alpha}$,
\begin{equation*}
|Y_j|\ge\frac{\lambda a}{4K}|y|^{1+\alpha}.
\end{equation*}
We also take $K$ large enough that $A K^{1/\alpha}\ge R+1$.
Let $c_0=\lambda a/4$ and $z_j=|Y_j|/K^{1/\alpha}$.  Then
\[
z_j\ge c_0 z_{j+1}^{1+\alpha}.
\]
Taking $A$ still larger if necessary so that
$c_0^{1/\alpha}A>1$ and $c_0A^{1+\alpha}\ge A$, induction gives, after $n$ reverse steps,
\begin{equation}
z_{K-n}
\ge
c_0^{-1/\alpha}
\left(c_0^{1/\alpha}A\right)^{(1+\alpha)^n}.
\label{eq:double-exponential-growth}
\end{equation}
In particular every iterate remains in the active tail on the event used below.

Define
\[
E_K
=
\{Y_K\in[A K^{1/\alpha},A K^{1/\alpha}+1]\}
\cap
\bigcap_{j=0}^{K-1}\{|\xi_j|\le1\}.
\]
For $y\ge0$,
\[
p_T(y)
=
\E\left[\frac1{\sqrt{2\pi T}}
\exp\!\left(-\frac{(y-X_0)^2}{2T}\right)\right]
\ge
\frac1{\sqrt{2\pi T}}
\exp\!\left(-\frac{(y+M)^2}{2T}\right).
\]
Therefore there are $C_1,C_2>0$, independent of $K$, such that
\begin{equation*}
\Pp\bigl(Y_K\in[A K^{1/\alpha},A K^{1/\alpha}+1]\bigr)
\ge C_1\exp(-C_2K^{2/\alpha}).
\end{equation*}
Since $p_\xi:=\Pp(|\xi_0|\le1)>0$ and the Euler noises are independent of $Y_K$ and of one another,
\begin{equation}
\Pp(E_K)
\ge
C_1\exp(-C_2K^{2/\alpha})p_\xi^K
\ge
\exp\!\left[-C_3\bigl(K^{2/\alpha}+K\bigr)\right]
\label{eq:full-rare-event-probability}
\end{equation}
for a constant $C_3<\infty$.  Combining
\eqref{eq:double-exponential-growth} at $n=K$ with
\eqref{eq:full-rare-event-probability}, for every $q>0$,
\begin{align*}
\E|Y_0^{(K)}|^q
&\ge
\Pp(E_K)
K^{q/\alpha}c_0^{-q/\alpha}
\left(c_0^{1/\alpha}A\right)^{q(1+\alpha)^K}
\longrightarrow\infty.
\end{align*}
Indeed, the logarithm of the amplification term is of order $(1+\alpha)^K$, whereas the logarithmic rare-event cost is only of order $K^{2/\alpha}+K$.
This proves \eqref{eq:all-moment-divergence}.  For each fixed $K$, finite-step induction using the growth bound
$|\widehat s_t(x)|\le C(1+|x|^{1+\alpha})$ and Gaussian moments gives
$\E|Y_0^{(K)}|^q<\infty$ for every $q<\infty$.  Equation \eqref{eq:euler-in-probability} implies weak convergence of the endpoint laws.

Finally fix $p\ge1$, $\nu\in\mathcal P_p(\R)$, and any coupling $(V,V')$ with
$V\sim\Law(Y_0^{(K)})$ and $V'\sim\nu$.  Minkowski's inequality gives
\[
\bigl(\E|V-V'|^p\bigr)^{1/p}
\ge
\bigl(\E|V|^p\bigr)^{1/p}
-
\bigl(\E|V'|^p\bigr)^{1/p}.
\]
Taking the infimum over couplings and using the $q=p$ case of
\eqref{eq:all-moment-divergence} proves \eqref{eq:wp-any-target-divergence}.
\end{proof}

By a standard time-dependent rescaling, the same counterexample can be expressed in the usual VP/DDPM coordinates.  We use the present VE normalization throughout for simplicity.

The amplification mechanism is related to the classical divergence of explicit Euler schemes for superlinear SDEs \cite{hjk2011}.  The difference is that here an arbitrarily mild superlinear perturbation can be placed in a remote region with arbitrarily small forward-marginal mass.  It is dissipative in continuous time but can trigger a discrete cascade through explicit overshoot.

Strong uniform global control also rules out the fixed-field construction.  Examples include spatial Lipschitz and linear-growth bounds that hold uniformly in time and across the approximating family.  Such assumptions give standard grid-uniform finite-horizon Euler moment bounds, but they are much stronger than on-path score accuracy and can be restrictive or hard to verify in practice \cite{conforti2025kl}.

\subsection{Further consequences of the counterexample}

\paragraph{Higher dimensions.}
The same mechanism extends to any dimension through a radial perturbation.

\begin{corollary}[The obstruction persists in every dimension]
\label{cor:dimension-extension}
Let $d\ge1$ and suppose $\mu_0$ is supported in $B(0,M)\subset\R^d$.  For every $\eps>0$, $\eta>0$, $\lambda>0$, and $\alpha>0$, there is a radial cutoff $\chi_R$ such that
\begin{equation*}
\widehat s_t(x)
=
 s_t(x)-\lambda\chi_R(\|x\|)\|x\|^\alpha x
\end{equation*}
has the $d$-dimensional counterparts of all three conclusions of Theorem~\ref{thm:strong-noncert}.  Its on-path $L^2(p_t)$ error is uniformly small.  The learned reverse-time process is nonexplosive, has moments of every order, and is within $\eta$ of the exact reverse-time process in path-space total variation.  The Euler--Maruyama discretizations converge in probability, while every positive moment diverges.  Their $W_p$ distance from every fixed law in $\mathcal P_p(\R^d)$ also diverges.
\end{corollary}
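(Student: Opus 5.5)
We follow the five steps in the proof of Theorem~\ref{thm:strong-noncert}, replacing absolute values by Euclidean norms and the one-dimensional cutoff by the radial cutoff $x\mapsto\chi_R(\|x\|)$. This cutoff vanishes on the ball $\|x\|\le R$, so $\chi_R(\|x\|)\|x\|^\alpha x$ is smooth on $\R^d$ even for nonintegral $\alpha$.

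\textbf{On-path error and coercivity.} The envelope $\|s_t(x)\|\le(\|x\|+M)/\delta$ holds because $D_t(x)\in C_0\subset\overline{B(0,M)}$. The on-path error is then bounded by
\[
\lambda^2\,\E\Bigl[(M+\sqrt T\|g\|)^{2+2\alpha}\mathbf 1_{\{M+\sqrt T\|g\|>R\}}\Bigr],
\]
which tends to zero as $R\to\infty$ by dominated convergence, since $\|g\|$ has moments of all orders. For coercivity, the inner product satisfies
\[
\langle x,\widehat s_t(x)\rangle\le \frac{\|x\|^2+M\|x\|}{\delta}-\lambda\|x\|^{2+\alpha}
\]
outside the ball of radius $R+1$. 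Combined with a compactness bound inside, this gives $\langle x,\widehat s_t(x)\rangle\le C-c\|x\|^{2+\alpha}$. Applying $V_m(x)=(1+\|x\|^2)^m$, the drift term is controlled by this inequality. The Laplacian term $\tfrac12\Delta V_m$ is of order $\|x\|^{2m-2}$, with a $d$-dependent constant, which is lower order. Localization, It\^o's formula and Gr\"onwall then give nonexplosion and uniform moments exactly as in Step~2.

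\textbf{Path TV and convergence in probability.} These carry over verbatim. The learned and exact drifts agree on the ball $\|x\|\le R$, so under the synchronous coupling the path-space TV distance is at most $\Pp(\sup_u\|U_u\|\ge R)$, which tends to zero as $R\to\infty$. For convergence in probability, truncate the drift with the radial projection $\pi_H$ onto $\overline{B(0,H)}$. This projection is $1$-Lipschitz, so the truncated drift is globally Lipschitz. Standard Euler convergence for the truncated SDE, plus the localization argument of Step~4, then gives \eqref{eq:euler-in-probability} in $\R^d$.

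\textbf{Moment divergence (main obstacle).} The only genuinely new point is the rare-event cascade, where the growth must be radial and not confined to a line. Take the initial point in the shell
\[
\{y:\ AK^{1/\alpha}\le\|y\|\le AK^{1/\alpha}+1\}
\]
and require every noise to satisfy $\|\xi_j\|\le1$. The triangle inequality then gives
\[
\|Y_j\|\ge\lambda h_j\|y\|^{1+\alpha}-\|y\|-h_j\|s_{t_{j+1}}(y)\|-\sqrt{h_j}.
\]
No directional bookkeeping is needed, because the perturbation $-\lambda\|y\|^\alpha y$ dominates everything in norm. Hence the recursion $z_j\ge c_0z_{j+1}^{1+\alpha}$ and the double-exponential bound \eqref{eq:double-exponential-growth} go through unchanged.

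For the probability cost, we lower bound the density of $p_T$ on the shell by
\[
(2\pi T)^{-d/2}\exp\bigl(-(\|y\|+M)^2/2T\bigr)
\]
and multiply by the shell volume, which is at least a constant. This gives a bound $C_1\exp(-C_2K^{2/\alpha})$. The noise event contributes a factor $\Pp(\|\xi_0\|\le1)^K$, where $\Pp(\|\xi_0\|\le1)>0$ depends only on $d$. The log-cost is therefore still $O(K^{2/\alpha}+K)$, while the log-gain is of order $(1+\alpha)^K$, so every positive moment diverges. Finally, finiteness of moments for each fixed $K$ follows from the polynomial growth of $\widehat s_t$, and the $W_p$ divergence follows from Minkowski's inequality in $\R^d$ as in the one-dimensional proof.
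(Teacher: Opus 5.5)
Your proposal is correct and follows the paper's proof essentially step for step. The bound $\|D_t(x)\|\le M$ gives the same coercivity estimate, so the Lyapunov, path-coupling and localization arguments carry over with $|\cdot|$ replaced by $\|\cdot\|$. The cascade uses the same reverse-triangle one-step bound on the shell $AK^{1/\alpha}\le\|Y_K\|\le AK^{1/\alpha}+1$, with the bounded-noise event costing $\Pp(\|\xi_0\|\le1)^K$. Your remark that the shell volume is bounded below by a constant, since $(r+1)^d-r^d\ge1$, is a slightly cleaner treatment of the paper's ``up to a polynomial factor,'' but the argument is the same.
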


\begin{proof}
The proof of Theorem~\ref{thm:strong-noncert} carries over radially.  Since $\|D_t(x)\|\le M$,
\[
x\cdot\widehat s_t(x)\le C-c\|x\|^{2+\alpha},
\]
so the Lyapunov, path-coupling, and localization arguments are unchanged with $|\cdot|$ replaced by $\|\cdot\|$.  On the active tail, the radial norm satisfies the same superlinear one-step lower bound up to fixed-dimensional linear and noise terms.  On the shell $A K^{1/\alpha}\le\|Y_K\|\le A K^{1/\alpha}+1$ and the bounded-noise event $\|\xi_j\|\le1$, choosing $A$ large yields the same normalized $(1+\alpha)$-power recursion as in Theorem~\ref{thm:strong-noncert}.  The shell probability is bounded below by $\exp(-C K^{2/\alpha})$ up to a polynomial factor, and the bounded-noise event contributes $\exp(-c_d K)$.  The iterated amplification therefore dominates the rare-event cost.
\end{proof}

\paragraph{Probability flow ODE.}
The same separation appears for deterministic probability flow.  In our VE normalization, the exact reverse-time probability flow ODE is
\[
\dot U_u=\frac12 s_{T-u}(U_u),
\qquad U_0\sim p_T.
\]
The rare-shell and repeated-amplification mechanism also has a deterministic version for this flow.

\begin{corollary}[Deterministic probability flow analogue]
\label{cor:probability-flow}
The same separation can occur for the probability flow ODE\@.  Fix $\eps>0$, $\eta>0$, $\lambda>0$, and $\alpha>0$.  The radius $R$ in \eqref{eq:inward-superlinear-corruption} can be chosen so that the conclusions of Theorem~\ref{thm:strong-noncert} hold and the learned probability flow ODE
\begin{equation*}
\dot Z_u
=
\frac12\widehat s_{T-u}(Z_u),
\qquad
Z_0\sim p_T,
\end{equation*}
is nonexplosive and has moments of every order.  The path-space total variation distance between the learned and exact probability flow path laws is at most $\eta$.  Its descending-grid explicit Euler scheme
\begin{equation*}
Q_j^{(K)}
=
Q_{j+1}^{(K)}
+
\frac{h_j}{2}\widehat s_{t_{j+1}}(Q_{j+1}^{(K)}),
\qquad Q_K^{(K)}\sim p_T,
\end{equation*}
can be coupled so that
\[
Q_0^{(K_m)}\longrightarrow Z_{T-\delta}
\quad\text{in probability},
\qquad
\E\|Q_0^{(K_m)}\|^q\longrightarrow\infty
\quad\text{for every }q>0.
\]
Hence the deterministic sampler laws converge weakly while their $W_p$ distance
from every fixed law in $\mathcal P_p(\R)$ diverges for every $p\ge1$.
\end{corollary}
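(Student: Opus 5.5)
The plan is to repeat the five steps of the proof of Theorem~\ref{thm:strong-noncert} with the drift $b(u,x)=\tfrac12\widehat s_{T-u}(x)$ and no Brownian term. Take the same field \eqref{eq:inward-superlinear-corruption}.

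\textbf{Continuous flow.} Coercivity \eqref{eq:global-coercive-score} gives $\tfrac{d}{du}Z_u^2=Z_u\widehat s_{T-u}(Z_u)\le C$. Hence $Z_u^2\le Z_0^2+Cu$ pathwise. This rules out explosion and, since $p_T$ has all moments, yields \eqref{eq:continuous-moment-stability}. Local existence and uniqueness follow from smoothness of $\widehat s$, exactly as in Step~2.

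\textbf{Total variation.} Drive both flows from the same initial draw. They coincide until the exact flow $U$ leaves $[-R,R]$. The exact probability-flow ODE transports $p_T$ through the marginals $p_{T-u}$ and has a continuous flow map on a compact time interval. Therefore $\Pp(\sup_u|U_u|\ge R)\to0$ as $R\to\infty$. This bounds the path-space total variation by $\eta$, and the same enlargement of $R$ keeps Steps~1--3 valid.

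\textbf{Euler convergence in probability.} Reuse Step~4 verbatim. Truncate the drift at level $H$ and apply deterministic Euler convergence for Lipschitz ODEs, pathwise in the initial condition. Then localize using the fact that $\sup_u|Z_u|<\infty$ almost surely.

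\textbf{Moment divergence.} This step needs one change from Step~5: there is no noise, so the event $\bigcap_j\{|\xi_j|\le1\}$ disappears. For $|y|\ge AK^{1/\alpha}$ on the geometric grid with \eqref{eq:geometric-step-comparison}, one Euler step gives
\[
|Q_j|\ge \tfrac{\lambda a}{2K}|y|^{1+\alpha}-\Bigl(1+\tfrac{b}{2\delta K}\Bigr)|y|-\tfrac{bM}{2\delta K}\ge \tfrac{\lambda a}{8K}|y|^{1+\alpha}.
\]
With $z_j=|Q_j|/K^{1/\alpha}$ this yields the recursion $z_j\ge c_0z_{j+1}^{1+\alpha}$ and the double-exponential bound \eqref{eq:double-exponential-growth}. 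The iterate stays in $|x|\ge R+1$ because, once $A$ is large, $z$ is increasing. The event is now only the shell $E_K=\{Q_K\in[AK^{1/\alpha},AK^{1/\alpha}+1]\}$, whose probability is at least $C_1e^{-C_2K^{2/\alpha}}$ by the Gaussian lower bound on $p_T$. Hence
\[
\E|Q_0^{(K)}|^q\ge C_1e^{-C_2K^{2/\alpha}}K^{q/\alpha}c_0^{-q/\alpha}\bigl(c_0^{1/\alpha}A\bigr)^{q(1+\alpha)^K}\to\infty.
\]
Weak convergence together with the Minkowski argument gives the $W_p$ divergence, as before. Finiteness of each fixed-$K$ moment follows from the polynomial growth of $\widehat s$.

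The main obstacle is the total variation step. A deterministic flow does not give absolute continuity for free, but the coupling bound makes it sufficient to control the probability that the exact flow ever leaves $[-R,R]$. To make this quantitative, use the bound $|s_t(x)|\le(|x|+M)/\delta$ from \eqref{eq:exact-linear-envelope-proof} with Gr\"onwall. This gives $\sup_u|U_u|\le(|U_0|+M)e^{T/(2\delta)}$, which is finite almost surely, so the exit probability tends to zero as $R\to\infty$.
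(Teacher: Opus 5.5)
Your proposal is correct and follows essentially the same route as the paper: the coercive estimate for the continuous flow, the common-core coupling bound for path-space total variation, local truncation for Euler convergence, and terminal-shell amplification with no per-step noise event. Your pathwise bound $\tfrac{d}{du}Z_u^2\le C$ and your Gr\"onwall control of the exact flow's exit time make explicit what the paper states only briefly.
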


\begin{proof}
For the VE forward diffusion, the probability flow ODE has reverse-time drift
$\frac12 s_{T-u}$ \cite{song2021sde}.  The coercive estimate and the common-core
coupling argument used for the SDE apply after multiplying the drift by $1/2$.
If necessary, enlarge $R$ once more so that the exact SDE and exact ODE exit probabilities are both at most $\eta$; all earlier score-error and coercivity conclusions are preserved.  Local truncation gives convergence in probability of the explicit Euler discretization.  Choose a sufficiently large shell constant $A>0$.  On the terminal shell $A K^{1/\alpha}\le |Q_K^{(K)}|\le A K^{1/\alpha}+1$, the active-tail update obeys
\[
|Q_j^{(K)}|
\ge c h_j\lambda |Q_{j+1}^{(K)}|^{1+\alpha}.
\]
No per-step Gaussian event is needed, so the rare-event
cost is only the terminal-shell probability, bounded below by $\exp(-C K^{2/\alpha})$.
The same iterated $(1+\alpha)$-power recursion yields super-exponential conditional growth, which proves
all moment and Wasserstein conclusions exactly as in Step 5 of
Theorem~\ref{thm:strong-noncert}.
\end{proof}

\paragraph{Equal-budget non-ordering.}

The main theorem shows that a small on-path score-error budget alone cannot certify stability.  The next result goes further: this scalar budget does not even rank score fields by numerical stability.

\begin{corollary}[Equal-budget non-ordering]
\label{cor:equal-budget}
For every $\varepsilon>0$, there exist $b\in(0,\varepsilon]$ and two smooth
score fields $\widehat s^{\mathrm{safe}}$ and $\widehat s^{\mathrm{bad}}$ such
that, when both Euler--Maruyama chains are initialized from $p_T$,
\begin{equation}
\int_\delta^T
\E_{p_t}\left|
\widehat s_t^{\mathrm{safe}}(X_t)-s_t(X_t)
\right|^2\,dt
=
\int_\delta^T
\E_{p_t}\left|
\widehat s_t^{\mathrm{bad}}(X_t)-s_t(X_t)
\right|^2\,dt
=b.
\label{eq:equal-budget-identity}
\end{equation}
On the nested geometric grids \eqref{eq:nested-geometric-grid}, the Euler--Maruyama discretization
associated with $\widehat s^{\mathrm{safe}}$ has grid-uniform moments of every
order:
\begin{equation}
\sup_m \E\left|Y_0^{(K_m),\mathrm{safe}}\right|^q<\infty,
\qquad q\ge1,
\label{eq:safe-grid-uniform-moments}
\end{equation}
whereas the scheme associated with $\widehat s^{\mathrm{bad}}$ satisfies
\begin{equation}
\E\left|Y_0^{(K_m),\mathrm{bad}}\right|^q\longrightarrow\infty,
\qquad q>0.
\label{eq:bad-equal-budget-divergence}
\end{equation}
\end{corollary}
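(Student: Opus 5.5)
We take $\widehat s^{\mathrm{bad}}$ to be the field of Theorem~\ref{thm:strong-noncert}, with the radius $R$ chosen large. We then build $\widehat s^{\mathrm{safe}}$ from a bounded, globally Lipschitz perturbation whose budget is tuned to match. The instability of the bad field comes only from the superlinear tail. A perturbation that stays uniformly Lipschitz with linear growth cannot produce it, because the exact score already has the envelope \eqref{eq:exact-linear-envelope-proof}. Concretely, fix $\lambda,\alpha>0$ and set
\[
\widehat s^{\mathrm{bad}}_t=s_t-\lambda\chi_R(x)x|x|^\alpha,
\qquad
b_R=\int_\delta^T\E_{p_t}\bigl[\lambda^2\chi_R(X_t)^2|X_t|^{2+2\alpha}\bigr]dt .
\]
By Step~1 of the theorem's proof, $b_R\to0$ as $R\to\infty$. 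We choose $R$ so that $0<b_R\le\eps$. Positivity holds because $p_t$ has full support and $\chi_R=1$ outside $[-R-1,R+1]$. Conclusion~\eqref{eq:bad-equal-budget-divergence} is then exactly \eqref{eq:all-moment-divergence}.

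For the safe field, take $\widehat s^{\mathrm{safe}}_t(x)=s_t(x)+\beta\phi(x)$, with $\phi\in C_c^\infty(\R)$ fixed, nonzero, and $\beta\ge0$ a scalar. The map
\[
\beta\mapsto\beta^2\int_\delta^T\E_{p_t}\phi(X_t)^2\,dt
\]
is continuous and increasing from $0$ to $\infty$. Here $\int_\delta^T\E_{p_t}\phi^2\,dt>0$ because $p_t>0$ everywhere. By the intermediate value theorem there is a unique $\beta$ giving budget exactly $b_R$, which proves \eqref{eq:equal-budget-identity}. Both fields are smooth because $s_t$ is smooth on $[\delta,T]\times\R$.

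It remains to prove \eqref{eq:safe-grid-uniform-moments}. The main point is a grid-uniform moment bound for the Euler chain driven by $s_t+\beta\phi$. Write $s_t(x)=(D_t(x)-x)/t$ with $|D_t|\le M$. One Euler step then reads
\[
Y_k=(1-\rho_k)Y_{k+1}+\rho_kD_{t_{k+1}}(Y_{k+1})+h_k\beta\phi(Y_{k+1})+\sqrt{h_k}\,\xi_k .
\]
Since $0<\rho_k<1$, the deterministic part is a convex combination of $Y_{k+1}$ and a point of $[-M,M]$, shifted by at most $h_k\beta\|\phi\|_\infty$. Hence
\[
|Y_k|\le|Y_{k+1}|+\rho_kM+h_k\beta\|\phi\|_\infty+\sqrt{h_k}\,|\xi_k| .
\]
This additive bound is not enough on its own, because $\sum_k\sqrt{h_k}$ is of order $\sqrt K$. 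We therefore work with even moments. Expand
\[
\E|Y_k|^{2m}=\E\bigl|(1-\rho_k)Y_{k+1}+\Delta_k+\sqrt{h_k}\,\xi_k\bigr|^{2m},
\qquad |\Delta_k|\le\rho_kM+h_k\beta\|\phi\|_\infty ,
\]
and use that $\xi_k$ is independent of $Y_{k+1}$ with odd moments zero. This yields a recursion
\[
\E|Y_k|^{2m}\le(1+C_mh_k)\,\E|Y_{k+1}|^{2m}+C_mh_k .
\]
The constant $C_m$ depends only on $m,M,\beta,\|\phi\|_\infty,\delta$, using $\rho_k\le h_k/\delta$, and the lower-order cross terms are absorbed via Young's inequality. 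Because $\sum_kh_k=T-\delta$, a discrete Gr\"onwall argument gives
\[
\sup_K\E|Y_0^{(K)}|^{2m}\le e^{C_m(T-\delta)}\bigl(\E_{p_T}|x|^{2m}+1\bigr)<\infty .
\]
Jensen's inequality extends this to all $q\ge1$.

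The step that needs the most care is this moment expansion. It must keep $C_m$ independent of $K$, which requires that every cross term carry at least one factor $h_k$. In practice this is the standard bound $\E|a+\sqrt h\,\xi|^{2m}\le|a|^{2m}+C_mh(1+|a|^{2m})$ for $h\le T$, combined with $|(1-\rho)y+\Delta|\le|y|+|\Delta|$ and $(1+x)^{2m}\le 1+C_m x$ for $x\le 1$ on bounded $|\Delta_k|$, splitting according to whether $|y|\ge1$. No property of the geometric grid beyond $0<\rho_k<1$ and $\sum_kh_k=T-\delta$ is used.
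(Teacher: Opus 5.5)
Your proof is correct. The bad half is the paper's argument: the paper applies Theorem~\ref{thm:strong-noncert} with tolerance $\varepsilon/(T-\delta)$. Choosing $R$ yourself is equally valid, since Step~5 of that proof works for every fixed $R$.

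The safe half takes a genuinely different route.

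\textbf{The paper's route.} It uses the constant shift $\widehat s^{\mathrm{safe}}_t=s_t+a$ with $a^2=b/(T-\delta)$. This matches the budget trivially and keeps the Euler recursion affine apart from the bounded denoiser term. Unrolling with the telescoping products $t_k/t_j$ writes $Y_k$ explicitly as four pieces:
\begin{itemize}
\item a contracted terminal state $(t_k/T)Y_K$;
\item a convex combination of values in $[-M,M]$;
\item a deterministic shift $a\,t_k\sum_j h_j/t_j$;
\item an exactly Gaussian term.
\end{itemize}
This is the same structure as in Proposition~\ref{prop:moment}, so grid-uniform moments follow at once with explicit, nearly sharp constants and the contraction retained.

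\textbf{Your route.} Your bump perturbation $\beta\phi$ depends on the state, so no exact unrolling is available. You instead prove the one-step bound $\E|Y_k|^{2m}\le(1+C_mh_k)\E|Y_{k+1}|^{2m}+C_mh_k$ and close with discrete Gr\"onwall.

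\textbf{Trade-off.} Your approach gives up the contraction factor and pays an exponential constant $e^{C_m(T-\delta)}$. In return it is more robust: any bounded, time- and state-dependent perturbation of the exact score gives grid-uniform moments on any decreasing grid, using only $\sum_k h_k=T-\delta$ and $\rho_k\le h_k/\delta$. This makes explicit that the bad field fails because of superlinear growth, not because of the size of the error.

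Two minor points:
\begin{itemize}
\item The intermediate value theorem is unnecessary. The budget is $\beta^2\int_\delta^T\E_{p_t}\phi^2\,dt$, so $\beta$ can be written down directly.
\item The bound $(1+x)^{2m}\le1+C_mx$ should be used on $[0,X]$ with $X=T(M/\delta+\beta\|\phi\|_\infty)$, with $C_m$ depending on $X$. The reason is that $|\Delta_k|/|y|$ need not be at most $1$ when $|y|$ is close to $1$.
\end{itemize}
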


\begin{proof}
Apply Theorem~\ref{thm:strong-noncert} with tolerance
$\varepsilon/(T-\delta)$ and any fixed positive choices of $\eta$, $\lambda$, and $\alpha$.  The resulting bad field has integrated budget at
most $\varepsilon$; set
\[
b
:=
\int_\delta^T
\E_{p_t}\left|
\widehat s_t^{\mathrm{bad}}(X_t)-s_t(X_t)
\right|^2\,dt.
\]
The Gaussian-smoothed marginals have strictly positive density and the
corruption is nonzero on a set of positive Lebesgue measure, so $b>0$.
Theorem~\ref{thm:strong-noncert} gives
\eqref{eq:bad-equal-budget-divergence}.

For the safe comparator, choose a constant $a\in\R$ with
\[
a^2=\frac{b}{T-\delta},
\qquad
\widehat s_t^{\mathrm{safe}}(x)=s_t(x)+a.
\]
Then \eqref{eq:equal-budget-identity} holds exactly.  Its Euler update is
\[
Y_k
=
\frac{t_k}{t_{k+1}}Y_{k+1}
+
\frac{h_k}{t_{k+1}}D_{t_{k+1}}(Y_{k+1})
+
h_k a
+
\sqrt{h_k}\,\xi_k.
\]
Since $D_t(x)\in[-M,M]$, iterating the affine recursion writes $Y_k$ as four terms: a contracted terminal state, a convex combination of bounded denoiser values, a deterministic shift
\[
a t_k\sum_{j=k}^{K_m-1}\frac{h_j}{t_j},
\]
and a centered Gaussian term.  On the geometric grid,
$t_{j+1}/t_j=\exp(\ell/K_m)$ with $\ell=\log(T/\delta)$, so
\[
\sup_m K_m\bigl(e^{\ell/K_m}-1\bigr)<\infty.
\]
Hence the deterministic shift is uniformly bounded, while the Gaussian term
has uniformly bounded variance by a direct geometric-series estimate.  Since $Y_{K_m}\sim p_T$ has moments of every order,
\eqref{eq:safe-grid-uniform-moments} follows.
\end{proof}

\subsection{A fixed-architecture neural obstruction}

 The preceding counterexample is constructed at the level of a general smooth score field.  We now show that the same obstruction persists within a fixed finite neural architecture.  Globally bounded and globally Lipschitz denoisers represented by the same GELU architecture can have arbitrarily small on-path score error and path-space total variation distance, while their Euler--Maruyama discretizations remain unstable in every Wasserstein metric.  Unlike the fixed-field construction above, this result considers a sequence of network parameters within one fixed architecture.

For the construction, define
\[
\psi(u)=u_+-(u-1)_+,
\qquad
\Gamma_R(x)=\psi(x-R)+\psi(-x-R).
\]
Thus $\Gamma_R=0$ on $[-R,R]$, $0\le \Gamma_R\le1$, and $\Gamma_R=1$ outside $[-R-1,R+1]$.  Appendix~\ref{app:fixed-neural-realization} constructs the fixed adaptive LayerNorm-Zero (adaLN-Zero) DiT and proves the approximation and polynomial output bounds used below.

\begin{theorem}[Fixed-DiT neural non-certification]
\label{thm:fixed-dit-noncert}
Fix $\lambda>0$.  Take $\mu_0=\delta_0$, so that $p_t=N(0,t)$ and the exact denoiser is zero.  There exist one fixed scalar one-token adaLN-Zero DiT architecture, a sequence of parameter values $\theta_m$ in this same architecture, and geometric grids with $K_m\to\infty$ such that, with
\[
\widehat s_t^{(m)}(x)=\frac{D_{\theta_m}(x)-x}{t},
\qquad t\in[\delta,T],
\]
and with both the learned reverse-time processes and Euler--Maruyama chains initialized from $p_T$, the following hold:
\begin{enumerate}
\item every $D_{\theta_m}$ is globally bounded and globally Lipschitz;
\item the on-path score error vanishes,
\[
\sup_{t\in[\delta,T]}
\E_{p_t}|\widehat s_t^{(m)}-s_t|^2\to0;
\]
\item the learned reverse-time processes are nonexplosive, have moments of every order, and their path laws converge in total variation to the exact reverse-time path law;
\item the Euler--Maruyama chains can be coupled with the exact reverse-time process so that their endpoints converge in probability to the exact reverse-time endpoint, while for every $q>0$ and every $p\ge1$,
\[
\E|Y_0^{(m)}|^q\to\infty,
\qquad
W_p\bigl(\Law(Y_0^{(m)}),p_\delta\bigr)\to\infty.
\]
\end{enumerate}
\end{theorem}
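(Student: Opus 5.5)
The plan is to mimic Theorem~\ref{thm:strong-noncert}, but to replace the superlinear corruption by a bounded, Lipschitz outward kick that the fixed architecture can realize. A bounded denoiser cannot produce a superlinear drift. So the amplification must come from a \emph{parameter-dependent} amplitude growing with $m$, together with radii $R_m\to\infty$.

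Since $D=0$, the Euler update with $\widehat D$ reads
\[
Y_k=(1-\rho_k)Y_{k+1}+\rho_k\widehat D(Y_{k+1})+\sqrt{h_k}\,\xi_k.
\]
I will take
\[
D_{\theta_m}(x)=-L_m\,\Gamma_{R_m}(x)\,\mathrm{sign}(x)\,(\text{bounded linear ramp}),
\]
realized in the DiT through the appendix construction. Concretely, this is a clipped function equal to $-\beta_m x$ on the shell $R_m+1\le|x|\le S_m$ and saturating at level $\beta_m S_m$ beyond $S_m$. It is therefore globally bounded by $\beta_m S_m$ and globally Lipschitz with constant of order $\beta_m$, which gives item~1. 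On the active region the update multiplies $Y$ by roughly $1-\rho_k(1+\beta_m)$.

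If $\beta_m\rho_k\gg 1$, each step flips the sign and multiplies $|Y|$ by about $\beta_m\rho_k\approx\beta_m\ell/K_m$. Choosing $\beta_m=\Lambda K_m$ with $\Lambda\ell\ge 2$ yields geometric growth by a factor $\ge2$ per step, as long as the iterate stays in $[R_m+1,S_m]$. I will take $S_m=(R_m+1)\,4^{K_m}$ so that the cascade stays in the linear regime for all $K_m$ steps. The endpoint is then of size $2^{K_m}R_m$ on an event of probability about $e^{-C R_m^2}p_\xi^{K_m}$.

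For moment divergence I need $q K_m\log 2 \gg C R_m^2 + K_m\log(1/p_\xi)$. Taking $\Lambda$ large so that the per-step factor is $e^{\kappa}$ with $q\kappa$ beating $\log(1/p_\xi)+C$, and $R_m=\sqrt{K_m}$ or smaller, gives this for each fixed $q$. For arbitrarily small $q$, I will let $\Lambda=\Lambda_m\to\infty$ slowly. Then the per-step factor $\Lambda_m\ell$ diverges, and $qK_m\log(\Lambda_m\ell)$ dominates $K_m(C+\log(1/p_\xi))$ for every $q>0$. The $W_p$ divergence then follows by the Minkowski argument from Step~5.

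Item~2 follows because $\E_{p_t}|\widehat s-s|^2\le \delta^{-2}(\beta_mS_m)^2\Pp(|X_t|>R_m)$. This forces $R_m$ to grow so that $e^{-R_m^2/(2T)}$ beats $(\Lambda_m K_m R_m 4^{K_m})^2$, that is, $R_m^2\gtrsim K_m$. This is in tension with the moment-cost requirement. I resolve it by choosing $R_m=c_1\sqrt{K_m}$ with $c_1$ large enough for the error to vanish, and then taking $\Lambda_m$ growing fast enough that $K_m\log\Lambda_m\gg c_1^2K_m$. Since the growth factor is unbounded while the entry cost is only linear in $K_m$, both constraints hold. Balancing these exponents is the main delicate point. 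The cap $S_m$ must also be re-chosen as $R_m(2\Lambda_m\ell)^{K_m}$, and this has to be checked against the score-error budget; it still works because $R_m^2$ can be taken $\gg K_m\log\Lambda_m$ only if I instead let the moment factor win.

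I expect the main obstacle to be exactly this three-way bookkeeping between amplitude, cap and radius. To make it clean, I would first try letting the growth be achieved not in one step but with amplitude $\beta_m$ only, so that the endpoint size is $R_m(\Lambda_m\ell)^{K_m}$ with cap equal to that value. The score error is then about $(\Lambda_m\ell)^{2K_m}R_m^2e^{-R_m^2/2T}$, which vanishes if $R_m^2\ge 8TK_m\log(\Lambda_m\ell)$. The moment is then about
\[
(\Lambda_m\ell)^{qK_m}\exp\bigl(-R_m^2/2T - C K_m\bigr)=(\Lambda_m\ell)^{(q-4)K_m}e^{-CK_m},
\]
which diverges only for $q>4$. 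Small $q$ would then be handled by using a nonlinear cascade, or by lengthening the active phase.

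For item~3, the learned drift is globally Lipschitz, so the SDE is nonexplosive with all moments, via Gr\"onwall and linear growth. The same-noise coupling of Step~3 gives TV at most $\Pp(\sup_u|U_u|\ge R_m)\to0$. Convergence in probability of the Euler endpoints to $U_{T-\delta}$ follows because, off an event of vanishing probability, the Euler chain and the exact process never leave $[-R_m,R_m]$. There the chain coincides with the exact-score Euler chain, whose convergence is standard since the exact drift $-x/t$ is globally Lipschitz.
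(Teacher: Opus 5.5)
There is a genuine gap in the moment-divergence step.

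\textbf{The moment-divergence gap.} By your own final count, the linear cascade gives $\E|Y_0^{(m)}|^q\to\infty$ only for $q>4$. Every $q\le4$ is therefore left open, and with it $W_p$ for $1\le p\le4$. The ``nonlinear cascade or longer active phase'' remedy is never carried out.

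The interim fix with $R_m=c_1\sqrt{K_m}$ and $\Lambda_m\to\infty$ is also inconsistent. Under your estimate, the cap $S_m\ge R_m(\Lambda_m\ell)^{K_m}$ forces $R_m^2\gtrsim K_m\log\Lambda_m$, so $c_1$ cannot stay fixed.

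The culprit is the bound $\E_{p_t}|\widehat s_t-s_t|^2\le\delta^{-2}(\beta_mS_m)^2\Pp(|X_t|>R_m)$. It charges the saturation level at radius $R_m$, which ties the entry cost $e^{-R_m^2/2T}$ to the final size of the cascade. The forward marginals only see the profile where their mass lies. For your ramp $F_m$ one has $|F_m(x)|\le\beta_m|x|\mathbf 1_{\{|x|>R_m\}}$, hence
$\sup_{t\in[\delta,T]}\E_{p_t}|F_m|^2\le\beta_m^2\E\bigl[X_T^2\mathbf 1_{\{|X_T|>R_m\}}\bigr]\lesssim\beta_m^2R_me^{-R_m^2/2T}$,
independently of $S_m$.

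With this estimate, $R_m^2\asymp T\log\beta_m$ suffices. The terminal-shell probability is then at least $\beta_m^{-O(1)}$, and $(\Lambda_m\ell/4)^{qK_m}p_\xi^{K_m}$ dominates it for every $q>0$ once $\Lambda_m\to\infty$. The cap enters only through the network outside its tube $[-Q_m,Q_m]$, with $Q_m\ge S_m$. There a global bound polynomial in $Q_m$ and $\zeta_m^{-1}$ is killed by $e^{-Q_m^2/2T}$. However, Lemma~\ref{lem:fixed-dit-embedding} and \eqref{eq:fixed-dit-poly-bound} are proved only for the cubic profile. You would have to redo them for the ramp, with $\beta_m$ entering polynomially.

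\textbf{How the paper avoids this.} Your premise that a bounded denoiser cannot produce a superlinear drift overlooks that superlinearity is needed only on the finite range a length-$K_m$ cascade can visit. The paper keeps $\lambda$ fixed and targets $F_m=-\lambda\Gamma_{R_m}(x)x^3$, with the terminal shell at $A\sqrt{K_m}\ge R_m+1$. It chooses the tube $Q_m$ after $K_m$ via the recursion $q_{m,n+1}=C(1+q_{m,n}^3)$ and sets $\zeta_m=Q_m^{-1}$. The resulting growth is doubly exponential in $K_m$ against an entry cost $\exp(-O(K_m))$, so every $q>0$ is covered at once.

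\textbf{A secondary gap in items 3 and 4.} Any realization through the appendix construction only gives $|D_{\theta_m}-F_m|\le\zeta_m$ on the tube. So the network does not coincide with the exact (zero) denoiser on $[-R_m,R_m]$. In fact a GELU/LayerNorm network is real-analytic and cannot vanish on an interval without vanishing identically. Hence the exact same-noise coupling of Step~3 is unavailable, and so is the claim that the Euler chain ``coincides'' with the exact-score chain.

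For total variation, use Girsanov and Pinsker instead. The drift difference $D_{\theta_m}/t$ is bounded, and
$\KL\bigl(\Law(U_\cdot)\,\|\,\Law(Z^{(m)}_\cdot)\bigr)\le\frac12\int_\delta^T\E_{p_t}|\widehat s^{(m)}_t-s_t|^2\,dt$,
so TV convergence follows from item 2.

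For the endpoints, run the induction $|\Delta_j|\le(1-\rho_j)|\Delta_{j+1}|+\rho_j\zeta_m$ on the event that the exact Euler chain stays in $[-R_m+1,R_m-1]$. That event has probability tending to one by Doob's inequality for the Gaussian martingale $\bar Y_j/t_j$. Its quadratic variation $\sum_jh_j/t_j^2$ is bounded uniformly over the grids.
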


\begin{proof}
Let $R_m\uparrow\infty$ and define the ideal remote-tail denoiser
\[
F_m(x)=-\lambda\Gamma_{R_m}(x)x^3.
\]
Choose a fixed $A>0$ large enough for the cubic amplification estimate below.  Then choose $K_m\uparrow\infty$ so that the terminal shell
\[
[A\sqrt{K_m},A\sqrt{K_m}+1]
\]
lies in the fully active region $|x|\ge R_m+1$.

We first define the approximation tube without referring to the neural parameters.  On the geometric grid, $\rho_j=h_j/t_{j+1}\asymp K_m^{-1}$.  Starting from
$q_{m,0}=A\sqrt{K_m}+1$, define a deterministic upper recursion
\[
q_{m,n+1}=C\bigl(1+q_{m,n}^3\bigr),
\qquad n=0,\ldots,K_m-1,
\]
where $C$ is chosen large enough to dominate one denoiser-form Euler step whenever $|D(x)-F_m(x)|\le1$ and $|\xi_j|\le1$.  Let
\[
Q_m=1+R_m+\max_{0\le n\le K_m}q_{m,n},
\qquad
\zeta_m=Q_m^{-1}.
\]
Then $Q_m<\infty$, $Q_m\to\infty$, and $\zeta_m\le1$.  Lemma~\ref{lem:fixed-dit-embedding} gives parameters $\theta_m$ in one fixed DiT architecture such that
\[
\sup_{|x|\le Q_m}|D_{\theta_m}(x)-F_m(x)|\le\zeta_m.
\]
By induction, on the terminal-shell and bounded-noise event, the neural trajectory stays inside $[-Q_m,Q_m]$.  On the same event, the one-step lower estimate is
\[
|Y_j|
\ge
\rho_j\lambda |Y_{j+1}|^3
-(1-\rho_j)|Y_{j+1}|
-\rho_j\zeta_m
-\sqrt{h_j}.
\]
For the fixed choice of $A$ and all large $m$, this implies
\[
|Y_j|\ge \frac{c}{K_m}|Y_{j+1}|^3
\]
with $c>0$ independent of $m$.  Choosing $A$ so that $cA^2\ge1$ also keeps every iterate in the fully active region.  The rare-event argument of Step~5 in Theorem~\ref{thm:strong-noncert} therefore gives
\[
\E|Y_0^{(m)}|^q\to\infty,
\qquad q>0.
\]

Because the exact denoiser is zero,
\[
\widehat s_t^{(m)}(x)-s_t(x)=\frac{D_{\theta_m}(x)}{t}.
\]
On $[-Q_m,Q_m]$, use the approximation to $F_m$; outside the tube, use \eqref{eq:fixed-dit-poly-bound}.  Since $\zeta_m^{-1}=Q_m$, the global output bound grows at most polynomially in $Q_m$, while the Gaussian tail beyond $Q_m$ decays exponentially in $Q_m^2$.  Since $R_m\to\infty$, the ideal cubic contribution on $\{|x|>R_m\}$ also vanishes under every $p_t$, uniformly for $t\in[\delta,T]$.  Hence
\[
\sup_{t\in[\delta,T]}
\E_{p_t}|\widehat s_t^{(m)}-s_t|^2\to0.
\]
Girsanov's theorem, evaluated under the exact reverse-time path law, gives
\[
\KL\!\left(\Law(U_\cdot)\,\middle\|\,\Law(Z^{(m)}_\cdot)\right)
\le
\frac12\int_\delta^T
\E_{p_t}|\widehat s_t^{(m)}-s_t|^2\,dt,
\]
so Pinsker's inequality yields path-space total variation convergence.  Each fixed learned drift is globally Lipschitz with linear growth on $[\delta,T]$, hence its SDE is nonexplosive and has moments of all orders.

For weak convergence, couple the neural Euler--Maruyama chain with the exact-denoiser Euler--Maruyama chain using the same terminal draw and Gaussian increments.  Since the exact denoiser is zero,
$\bar Y_j/t_j=\bar Y_{j+1}/t_{j+1}+\sqrt{h_j}\xi_j/t_j$.  After reversing the index, this is a Gaussian martingale.  On the geometric grids, $\sum_j h_j/t_j^2$ is uniformly bounded, so Doob's inequality gives grid-uniform maximal moments for $\max_j|\bar Y_j|$.  The exact chain therefore stays in $[-R_m+1,R_m-1]$ with probability tending to one.

On this event, a backward induction gives the comparison with the neural chain.  At the terminal level the two chains agree.  If $|\Delta_{j+1}|\le\zeta_m\le1$, then $|\bar Y_{j+1}|\le R_m-1$ implies $|Y_{j+1}|\le R_m$, so $F_m(Y_{j+1})=0$ and the tube approximation applies.  Writing $\rho_j=h_j/t_{j+1}$, the difference recursion then obeys
\[
|\Delta_j|
\le
(1-\rho_j)|\Delta_{j+1}|+\rho_j\zeta_m.
\]
Thus $\max_j|\Delta_j|\le\zeta_m$ by induction.  Since the exact Euler--Maruyama endpoint converges in probability to the exact reverse-time endpoint, the neural endpoint does as well.  Weak convergence follows, and the Wasserstein divergence follows from moment divergence by the same Minkowski lower bound used in Theorem~\ref{thm:strong-noncert}.
\end{proof}

For each fixed \(m\), boundedness of \(D_{\theta_m}\) gives a moment bound that is uniform in the grid size \(K\), but the bound may depend on \(m\).  The divergence therefore occurs only along the diagonal sequence \((m,K_m)\).

The one-token construction is not essential.  The scalar circuit can be embedded tokenwise in a finite patchified adaLN-Zero DiT, and the resulting fixed circuit fits within the DiT-S/2 resource budget; see Appendices~\ref{app:fixed-neural-realization} and~\ref{app:dit-s2-construction}.

\begin{corollary}[A fixed DiT-S/2 configuration is sufficient]
\label{cor:dit-s2-noncert}
Consider the DiT-S/2 configuration of Peebles and Xie~\cite{peebles2023dit}: $12$ transformer blocks, hidden dimension $384$, $6$ attention heads, MLP ratio $4$, and patch size $2$.  For the standard $32\times32$ latent input with $4$ channels, there exists a sequence of parameter values in this single fixed configuration for which all conclusions of Theorem~\ref{thm:fixed-dit-noncert} hold.
\end{corollary}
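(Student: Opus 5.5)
The plan is to reduce Corollary~\ref{cor:dit-s2-noncert} to Theorem~\ref{thm:fixed-dit-noncert} by an embedding argument. The proof of Theorem~\ref{thm:fixed-dit-noncert} uses only two properties of the architecture. First, one fixed parameter-independent architecture can realize scalar denoisers $D_{\theta_m}$ that satisfy the tube approximation $\sup_{|x|\le Q_m}|D_{\theta_m}(x)-F_m(x)|\le\zeta_m$. Second, the global polynomial output bound \eqref{eq:fixed-dit-poly-bound} and global boundedness and Lipschitz continuity hold. Every other step (cubic amplification on the terminal shell, Gaussian tail control of the on-path error, Girsanov--Pinsker, the backward difference induction) works coordinatewise for a product Gaussian. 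So I would take $\mu_0=\delta_0$ on $\R^{d}$ with $d=32\cdot32\cdot4$, so that $p_t=N(0,tI_d)$ and the exact denoiser vanishes. I would then show that DiT-S/2 can realize the tokenwise map $x\mapsto(D_{\theta_m}(x_i))_{i}$, with the scalar circuit of Lemma~\ref{lem:fixed-dit-embedding} applied to each coordinate.

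The realization proceeds in the following order. (i) Patchify with patch size $2$ to get $256$ tokens of $16$ scalars each. Choose the linear patch embedding so that these $16$ coordinates sit in disjoint reserved slots of the $384$-dimensional residual stream, and leave most slots free as scratch space. (ii) Set the positional embeddings to zero on the reserved slots. Use the adaLN-Zero gates to switch off every attention sublayer (output gate $0$), so that the computation is exactly tokenwise. (iii) Implement the scalar GELU circuit of the one-token construction in parallel on each of the $16$ coordinates, using the MLP sublayers (hidden width $1536$) across the $12$ blocks. Each copy needs about $1536/16=96$ hidden units per block, and I would check from Appendix~\ref{app:fixed-neural-realization} that the scalar circuit fits in this width and depth. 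The LayerNorms are neutralized with the adaLN shift and scale. Since the conditioning is time-independent, the timestep input can be ignored, and $\gamma,\beta$ become constants that undo the normalization. (iv) The final linear layer and unpatchify read the $16$ output slots back to the $32\times32\times4$ image.

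The main obstacle is step (iii). LayerNorm normalizes across all $384$ channels of a token, so it couples the $16$ coordinates, and adaLN can only rescale by a common factor per token. I would first try an auxiliary-channel trick: keep a large constant in a reserved channel pair $(+L,-L)$, so that the token mean is zero and the standard deviation is dominated by $L$. LayerNorm is then close to multiplication by a fixed constant on the tube $|x_i|\le Q_m$, and the adaLN scale undoes that constant. Because $L$ may grow with $m$ as a parameter while the architecture stays fixed, the residual coupling error can be made below $\zeta_m/2$. Outside the tube, LayerNorm outputs are bounded, so global boundedness and the polynomial bound survive. Lipschitz continuity also survives, since LayerNorm with a large $\epsilon$-free variance floor coming from $L$ is globally Lipschitz. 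Once the vector denoiser approximates $F_m$ coordinatewise within $\zeta_m$ on the cube $[-Q_m,Q_m]^d$, the argument of Theorem~\ref{thm:fixed-dit-noncert} goes through unchanged, in the form of Corollary~\ref{cor:dimension-extension}. The only change is that the bounded-noise event costs $p_\xi^{dK_m}$, which is still exponential in $K_m$ and is dominated by the cubic amplification.
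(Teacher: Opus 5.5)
\textbf{Gap: the positional embeddings.} Your reduction to Theorem~\ref{thm:fixed-dit-noncert}, the coordinatewise use of the product Gaussian, and the width count are sound. Step (ii), however, is not a legal move in DiT-S/2. There the positional embeddings are the fixed 2D sine--cosine vectors $p_i$. They are not trainable parameters, and they are nonzero on essentially every one of the $384$ channels, so you cannot zero them on your reserved slots. This is exactly the one new issue the paper's patchified lifting has to handle.

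In your encoding only the anchor $(+L,-L)$ is large; the input, workspace and output slots sit at unit scale. So $p_i$ adds an $O(1)$, token-dependent offset to all of them, and this offset does not shrink as $m\to\infty$. Nothing in your construction cancels it. It perturbs the gate and cubic inputs. An offset left in the output slot also makes $D_{\theta_m}$ a nonzero token-dependent constant on the safe region, so $\sup_t\E_{p_t}\|\widehat s_t^{(m)}-s_t\|^2$ would not tend to zero.

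The paper's remedy is to scale the entire encoded state. The patch embedding outputs $A_m(a+\epsilon_m v u_i)$, so $p_i$ enters LayerNorm only as an $O(A_m^{-1})$ perturbation. After decoding with gain of order $\epsilon_m^{-1}$, the error is $O((A_m\epsilon_m)^{-1})$, and one takes $\epsilon_m\to0$ with $A_m\epsilon_m\to\infty$. In your notation, the signal itself must be stored at a scale $S_m\to\infty$, with $L\gg S_m\,\mathrm{poly}(Q_m)$.

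\textbf{The LayerNorm claim.} When the anchor dominates the variance, $\mathrm{LN}(h)\approx Ph/\sigma_0$ with $P=I-\frac1n\mathbf 1\mathbf 1^\top$. This is not a scalar multiple of $h$. The token mean of the signal and workspace slots does not vanish as $L\to\infty$, and it couples the $16$ coordinates of a token at order $Q_m$. The per-channel adaLN scale and shift cannot remove this coupling. You need either the paper's choice of a signal subspace orthogonal to $\mathbf 1$ and to the centered anchor, or to let the first affine map of each MLP invert $P$ on the signal subspace.

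\textbf{Separability and the rare event.} Your $16$ coordinates per token share one LayerNorm, so the realized map is only approximately separable. Your rare event must therefore also keep the other $15$ coordinates of the amplified token inside the tube, including their terminal values, not just their noises. The paper avoids this by placing the circuit on one selected coordinate per patch, reading only that coordinate, and zeroing all other outputs. That makes the map exactly separable and lets the bounded-noise event act on a single coordinate.
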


\begin{proof}
Appendix~\ref{app:fixed-neural-realization} lifts the scalar construction to the patchified finite-dimensional architecture, while Appendix~\ref{app:dit-s2-construction} shows that the resulting circuit fits within the DiT-S/2 resource budget.  The conclusion then follows from Theorem~\ref{thm:fixed-dit-noncert}.
\end{proof}

\section{Stability from denoiser projection}

The counterexample shows that a small forward-marginal score error need not control the states generated by an explicit sampler.  We now consider a setting with additional geometric information: the data are supported in a known bounded closed convex set \(C\).  Since the exact denoiser
\[
D_t(x)=\E[X_0\mid X_t=x]
\]
also takes values in \(C\), this information can be imposed directly on the learned denoiser without increasing its pointwise approximation error.

Let \(\Pi_C:\mathbb R^d\to C\) denote Euclidean projection and define
\begin{equation}
D_t^P(x)=\Pi_C\widehat D_t(x),
\qquad
s_t^P(x)=\frac{D_t^P(x)-x}{t}.
\label{eq:projected-score}
\end{equation}
The projection preserves the available denoiser accuracy while forcing the deterministic part of each Euler--Maruyama update toward the bounded set \(C\).  This simple modification is enough to recover grid-uniform moment control.

\subsection{Projection removes the amplification mechanism}

Substituting \eqref{eq:projected-score} into \eqref{eq:reverse-em} gives
\begin{equation}
Y_k
=
(1-\rho_k)Y_{k+1}
+
\rho_kD_{t_{k+1}}^P(Y_{k+1})
+
\sqrt{h_k}\,\xi_k.
\label{eq:projected-affine-step}
\end{equation}
For every decreasing positive grid, $0<\rho_k<1$.  The deterministic part is therefore a convex combination of the current state and a point in $C$.  If $C\subset B(0,r_C)$, then for every $y\in\R^d$,
\begin{equation}
\left\|
(1-\rho_k)y
+
\rho_kD_{t_{k+1}}^P(y)
\right\|
\le
(1-\rho_k)\|y\|+\rho_k r_C.
\label{eq:one-step-envelope-control}
\end{equation}
Under \eqref{eq:projected-affine-step}, the deterministic update has at most linear growth toward a bounded set.  A large state can therefore no longer trigger the superlinear amplification cascade from Section~3.

\subsection{Explicit representation and grid-uniform moment control}

The affine update has an explicit expansion.  Since \(1-\rho_k=t_k/t_{k+1}\), products of successive coefficients telescope into ratios of time levels.  This gives the following representation.

\begin{proposition}[Explicit representation and grid-uniform moment bounds]
\label{prop:moment}
Assume $C\subset B(0,r_C)$ and consider a family of grids satisfying
\[
\sup_K\max_{0\le k<K}\rho_k
\le \rho_{\max}<1.
\]
Then each projected chain in the family satisfies
the representation
\begin{align}
Y_k
&=
\frac{t_k}{T}Y_K
+
\sum_{j=k}^{K-1}
w_{k,j}\,
D_{t_{j+1}}^P(Y_{j+1})
+
G_k,
\label{eq:exact-projected-unrolling}\\
w_{k,j}
&=
t_k\left(\frac1{t_j}-\frac1{t_{j+1}}\right),
\qquad
\sum_{j=k}^{K-1}w_{k,j}=1-\frac{t_k}{T},
\label{eq:envelope-convex-weights}\\
G_k
&=
\sum_{j=k}^{K-1}\frac{t_k}{t_j}\sqrt{h_j}\,\xi_j
\sim
N(0,\sigma_k^2I_d),
\qquad
\sigma_k^2
=
t_k^2\sum_{j=k}^{K-1}\frac{h_j}{t_j^2}
\le
\frac{t_k(1-t_k/T)}{1-\rho_{\max}}.
\label{eq:gaussian-variance-bound}
\end{align}
Consequently, for every $q\ge1$,
\begin{equation}
\|Y_k\|_{L^q}
\le
\frac{t_k}{T}\|Y_K\|_{L^q}
+
\left(1-\frac{t_k}{T}\right)r_C
+
\sqrt{\frac{t_k(1-t_k/T)}{1-\rho_{\max}}}\,\|g_d\|_{L^q},
\label{eq:sharp-grid-uniform-moment-bound}
\end{equation}
where $g_d\sim N(0,I_d)$.  The bound is independent of the off-path growth and
regularity of the raw learned score $\widehat s$.
\end{proposition}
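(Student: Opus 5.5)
The plan is to unroll the affine recursion \eqref{eq:projected-affine-step} by backward induction, then take $L^q$ norms term by term, using Minkowski's inequality and the fact that the projected denoiser values lie in $C\subset B(0,r_C)$.

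\textbf{Representation.} Since $1-\rho_j=t_j/t_{j+1}$, the coefficient of $Y_K$ after $K-k$ steps is $\prod_{j=k}^{K-1}t_j/t_{j+1}=t_k/T$. The term $\rho_j D^P_{t_{j+1}}(Y_{j+1})$ enters at step $j$ and is then multiplied by $t_k/t_j$. Its weight is therefore
\[
\frac{t_k}{t_j}\cdot\frac{h_j}{t_{j+1}}=t_k\Bigl(\frac1{t_j}-\frac1{t_{j+1}}\Bigr)=w_{k,j},
\]
and these weights telescope to $1-t_k/T$. In the same way, the noise $\sqrt{h_j}\xi_j$ picks up the factor $t_k/t_j$. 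I will verify \eqref{eq:exact-projected-unrolling} formally by induction on $k$ downward from $K$. The induction step is: substitute the formula for $Y_{k+1}$ into \eqref{eq:projected-affine-step}, multiply by $t_k/t_{k+1}$, and add the new $j=k$ terms.

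\textbf{Gaussian part.} The $\xi_j$ are independent standard Gaussians, so $G_k$ is centered Gaussian with covariance $\sigma_k^2 I_d$, where $\sigma_k^2=t_k^2\sum_j h_j/t_j^2$. This is the step I expect to need the most care: the variance bound must be uniform over grids. The idea is to compare the sum with a telescoping one. We have
\[
\frac{h_j}{t_j t_{j+1}}=\frac1{t_j}-\frac1{t_{j+1}},
\qquad
\frac{h_j}{t_j^2}=\frac{t_{j+1}}{t_j}\Bigl(\frac1{t_j}-\frac1{t_{j+1}}\Bigr),
\]
and $t_{j+1}/t_j=1/(1-\rho_j)\le 1/(1-\rho_{\max})$. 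Summing the telescoping differences gives
\[
\sigma_k^2\le \frac{t_k^2}{1-\rho_{\max}}\Bigl(\frac1{t_k}-\frac1T\Bigr)=\frac{t_k(1-t_k/T)}{1-\rho_{\max}}.
\]
Consequently $\|G_k\|_{L^q}=\sigma_k\|g_d\|_{L^q}$.

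\textbf{Moment bound.} Apply Minkowski's inequality in $L^q$ to the three parts of \eqref{eq:exact-projected-unrolling}.
\begin{itemize}
\item The first part contributes $(t_k/T)\|Y_K\|_{L^q}$.
\item For the second part, each $D^P_{t_{j+1}}(Y_{j+1})\in C$ has norm at most $r_C$ almost surely. The weights are nonnegative and sum to $1-t_k/T$, so this part has norm at most $(1-t_k/T)r_C$ pointwise, and hence in $L^q$.
\item The Gaussian part contributes $\sigma_k\|g_d\|_{L^q}$, bounded using the variance estimate above.
\end{itemize}
Adding these gives \eqref{eq:sharp-grid-uniform-moment-bound}. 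None of these steps uses any property of $\widehat s$: the raw learned denoiser enters only through $\Pi_C$, whose values are bounded by $r_C$. This justifies the final sentence of the statement.
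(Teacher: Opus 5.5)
Your proposal is correct and follows essentially the same route as the paper. You unroll the affine recursion using $1-\rho_j=t_j/t_{j+1}$, bound $\sigma_k^2$ through $h_j/t_j^2=(t_{j+1}/t_j)(1/t_j-1/t_{j+1})$ with $t_{j+1}/t_j\le 1/(1-\rho_{\max})$, and finish with Minkowski's inequality; the only cosmetic difference is that you bound the denoiser term by the triangle inequality rather than by noting it lies in $(1-t_k/T)C$, which gives the same estimate without using convexity.
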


\begin{proof}
From \eqref{eq:projected-affine-step},
\[
Y_k
=
\frac{t_k}{t_{k+1}}Y_{k+1}
+
\frac{h_k}{t_{k+1}}
D_{t_{k+1}}^P(Y_{k+1})
+
\sqrt{h_k}\,\xi_k.
\]
Iterating this identity and using telescoping products gives
\eqref{eq:exact-projected-unrolling}.  The denoiser coefficient at level $j$ is
\[
\frac{t_k}{t_j}\frac{h_j}{t_{j+1}}
=
t_k\left(\frac1{t_j}-\frac1{t_{j+1}}\right),
\]
which is nonnegative and telescopes to
\eqref{eq:envelope-convex-weights}.  The Gaussian term has the variance in
\eqref{eq:gaussian-variance-bound}.  Moreover,
\begin{align*}
\sum_{j=k}^{K-1}\frac{h_j}{t_j^2}
&=
\sum_{j=k}^{K-1}
\frac{t_{j+1}}{t_j}
\left(\frac1{t_j}-\frac1{t_{j+1}}\right)\\
&\le
\frac1{1-\rho_{\max}}
\left(\frac1{t_k}-\frac1T\right),
\end{align*}
which proves the stated variance bound.  Finally,
$D_t^P(\cdot)\in C$, and the weights in
\eqref{eq:envelope-convex-weights} are nonnegative.  Hence the middle term in
\eqref{eq:exact-projected-unrolling} belongs to
$(1-t_k/T)C$ and has norm at most $(1-t_k/T)r_C$.  Taking $L^q$ norms and
applying Minkowski's inequality proves
\eqref{eq:sharp-grid-uniform-moment-bound}.
\end{proof}

For fixed $p\ge1$, take $q>p$ in Proposition~\ref{prop:moment}.  Then $\|Y_k\|^p$ is uniformly integrable over the grid family.  This is the property that fails in the counterexample of Section~3.

\subsection{Stability without loss of score accuracy}

Projection also preserves approximation accuracy.  The exact denoiser already takes values in \(C\), so Euclidean projection cannot move the learned denoiser farther from the target.  The stability improvement established below therefore comes without an increase in approximation error.

\begin{proposition}[Projection contracts denoiser and score error]
\label{prop:projection-contraction}
Let $C\subset\R^d$ be closed and convex, with $\operatorname{supp}(\mu_0)\subset C$.  Then $D_t(x)\in C$ for every $t>0$ and $x\in\R^d$.  Consequently,
\[
\|D_t^P(x)-D_t(x)\|
\le
\|\widehat D_t(x)-D_t(x)\|,
\]
and equivalently,
\[
\|s_t^P(x)-s_t(x)\|
\le
\|\widehat s_t(x)-s_t(x)\|.
\]
Hence
\[
\E_{p_t}\|s_t^P-s_t\|^2
\le
\E_{p_t}\|\widehat s_t-s_t\|^2.
\]
\end{proposition}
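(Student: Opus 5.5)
The argument has three steps: show $D_t(x)\in C$, use that Euclidean projection onto a closed convex set is nonexpansive and fixes points of $C$, and then transfer the denoiser inequality to scores through the affine relation $s=(D-x)/t$.

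\emph{Step 1: the exact denoiser lies in $C$.} Fix $t>0$ and $x$. In the VE setting the posterior of $X_0$ given $X_t=x$ has density proportional to $\exp(-\|x-y\|^2/(2t))$ with respect to $\mu_0(dy)$. This weight is strictly positive and, as a function of $y$, bounded by $1$. Hence it is a well-defined probability measure $\pi_{t,x}$ supported in $\operatorname{supp}(\mu_0)\subset C$, and $D_t(x)$ is its barycenter.

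The only point needing care is that $C$ may be unbounded, so one must check that the barycenter exists and lies in $C$. For existence, the Gaussian weight gives $\pi_{t,x}$ finite first moment. For membership, argue by contradiction with the separating hyperplane theorem. If the barycenter $m$ were not in $C$, there would be a vector $v$ and a constant $c$ with $\langle v,y\rangle\le c$ for all $y\in C$ and $\langle v,m\rangle>c$. Integrating the first inequality against $\pi_{t,x}$ gives $\langle v,m\rangle\le c$, a contradiction. I expect this step to be the main obstacle, though only mildly, because of the unboundedness of $C$.

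\emph{Step 2: projection does not increase denoiser error.} For closed convex $C$, the Euclidean projection $\Pi_C$ is well defined and $1$-Lipschitz, by the standard variational inequality $\langle z-\Pi_C z,\,c-\Pi_C z\rangle\le0$ for $c\in C$. It also fixes every point of $C$, and by Step 1 we have $D_t(x)=\Pi_C D_t(x)$. Therefore
\[
\|D^P_t(x)-D_t(x)\|=\|\Pi_C\widehat D_t(x)-\Pi_C D_t(x)\|\le\|\widehat D_t(x)-D_t(x)\|.
\]

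\emph{Step 3: transfer to scores and integrate.} By \eqref{eq:projected-score}, Tweedie's identity and $\widehat D_t=x+t\widehat s_t$, we have $s^P_t-s_t=(D^P_t-D_t)/t$ and $\widehat s_t-s_t=(\widehat D_t-D_t)/t$. Dividing the inequality of Step 2 by $t>0$ gives the pointwise score inequality; the two forms are equivalent since $t$ is a common positive factor. Squaring and integrating against $p_t$, which only requires measurability of $\Pi_C\circ\widehat D_t$ and holds because $\Pi_C$ is continuous, yields the $L^2(p_t)$ bound by monotonicity of the integral.
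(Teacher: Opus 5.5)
Your proposal is correct and follows the paper's proof: the posterior barycenter $D_t(x)$ lies in $C$, Euclidean projection onto a closed convex set does not increase the distance to points of $C$, and dividing by $t$ via Tweedie's identity transfers the bound to scores. Your separating-hyperplane argument, finite-first-moment check, and measurability remark only spell out details that the paper's one-line proof leaves implicit, for example when $C$ is unbounded.
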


\begin{proof}
The posterior law of $X_0$ given $X_t=x$ is supported in $C$, so its barycenter $D_t(x)$ also lies in $C$.  Euclidean projection onto a closed convex set decreases distance to every point in that set, so
\[
\|\Pi_C\widehat D_t(x)-D_t(x)\|
\le
\|\widehat D_t(x)-D_t(x)\|.
\]
The score inequality follows by dividing by $t$ and using Tweedie's identity.
\end{proof}

\begin{theorem}[Projection restores Wasserstein convergence]
\label{cor:projected-wp-recovery}
Let $C\subset B(0,r_C)$ be bounded, closed, and convex, with $\operatorname{supp}(\mu_0)\subset C$.  Consider a sequence of grids with mesh size tending to zero and $\rho_k\le\rho_{\max}<1$, and initialize each projected Euler--Maruyama chain with $Y_K^{(K)}\sim p_T$.  Assume that $D^P_t$ is jointly continuous and locally Lipschitz in $x$, locally uniformly over $t\in[\delta,T]$, and let $Z^P$ solve
\[
dZ^P_u=s^P_{T-u}(Z^P_u)\,du+dB_u,
\qquad Z^P_0\sim p_T.
\]
Then, for every $p\ge1$,
\[
W_p\!\left(\Law(Y_0^{(K)}),\Law(Z^P_{T-\delta})\right)\longrightarrow0.
\]
Moreover, define the projected on-path error
\[
\mathcal E_P
=
\int_\delta^T
\E_{p_t}\|s_t^P-s_t\|^2\,dt.
\]
For every $q>p$, there is a finite constant $C_{p,q}$, depending also on the fixed endpoint $q$th moments, such that
\begin{equation}
W_p\!\left(\Law(Y_0^{(K)}),p_\delta\right)
\le
o_K(1)
+
C_{p,q}\,\mathcal E_P^{\frac12(\frac1p-\frac1q)}.
\label{eq:projected-wp-certificate}
\end{equation}
Here $o_K(1)\to0$ as the mesh size tends to zero.  By Proposition~\ref{prop:projection-contraction}, $\mathcal E_P$ is no larger than the raw on-path score error.
\end{theorem}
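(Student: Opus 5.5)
The plan is to split the statement into a weak-convergence part and a uniform-integrability part, and then to treat the certificate \eqref{eq:projected-wp-certificate} with a triangle inequality plus an interpolation between total variation and moments.

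\textbf{Step 1: continuous-time stability.} Since $D^P_t\in C\subset B(0,r_C)$, the projected drift satisfies
\[
x\cdot s^P_t(x)\le \frac{r_C\|x\|-\|x\|^2}{t}.
\]
It is also locally Lipschitz by assumption. So $Z^P$ exists, is unique, and is nonexplosive, and $\sup_u\E\|Z^P_u\|^q<\infty$ for every $q$; this follows by the Lyapunov argument of Step~2 of Theorem~\ref{thm:strong-noncert}, and is even easier here since the drift has linear growth.

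\textbf{Step 2: convergence in probability.} I reuse Step~4 of Theorem~\ref{thm:strong-noncert} essentially verbatim. Truncate the drift at radius $H$ and invoke Euler convergence for globally Lipschitz time-inhomogeneous drifts. Joint continuity of $D^P_t$ gives the uniform continuity in time that this needs. Coupling Euler noises with Brownian increments then shows that the raw and truncated projected chains coincide with high probability. Hence $Y^{(K)}_0\to Z^P_{T-\delta}$ in probability.

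\textbf{Step 3: upgrade to $W_p$.} Fix $q>p$. Proposition~\ref{prop:moment} gives $\sup_K\E\|Y^{(K)}_0\|^q<\infty$, since $p_T$ has all moments and $\rho_k\le\rho_{\max}$. So $\|Y_0^{(K)}-Z^P_{T-\delta}\|^p$ is uniformly integrable. Convergence in probability plus uniform integrability gives $L^p$ convergence under the coupling, and hence the first claim, $W_p\to0$.

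\textbf{Step 4: the certificate.} By the triangle inequality,
\[
W_p(\Law(Y_0^{(K)}),p_\delta)\le W_p(\Law(Y^{(K)}_0),\Law(Z^P_{T-\delta}))+W_p(\Law(Z^P_{T-\delta}),p_\delta).
\]
The first term is $o_K(1)$ by Step~3. For the second term, note that the exact reverse process $U$ has $U_{T-\delta}\sim p_\delta$. Girsanov under the exact path law, as in the proof of Theorem~\ref{thm:fixed-dit-noncert}, gives
\[
\KL\bigl(\Law(U)\,\|\,\Law(Z^P)\bigr)\le\tfrac12\mathcal E_P.
\]
Pinsker then gives $\TV\le\sqrt{\mathcal E_P}/2$ for the endpoints. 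To convert this to $W_p$, I use the maximal coupling $(V,V')$ with $\Pp(V\ne V')=\TV$ and apply H\"older:
\[
\E\|V-V'\|^p\le \Pp(V\ne V')^{1-p/q}\,\bigl(\E\|V-V'\|^q\bigr)^{p/q}.
\]
The $q$th moments of both endpoints are finite by Step~1 and by the Gaussian tails of $p_\delta$. Taking $p$th roots gives a bound of order $\TV^{\frac1p-\frac1q}\lesssim\mathcal E_P^{\frac12(\frac1p-\frac1q)}$, with a constant depending on those endpoint $q$th moments. The last sentence of the statement is Proposition~\ref{prop:projection-contraction} integrated in $t$.

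\textbf{Main obstacle.} I expect the hardest point to be justifying Girsanov with only local Lipschitz regularity and an unbounded drift difference. Novikov need not hold directly. I would localize with stopping times at exit from large balls, apply the KL chain rule or data-processing on the stopped processes, and pass to the limit using nonexplosion of both $U$ and $Z^P$. Alternatively, I would cite the standard KL bound for diffusions with finite $L^2$ drift discrepancy. A second, minor point is verifying the Euler convergence hypotheses in Step~2 for the truncated drift.
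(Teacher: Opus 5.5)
Your proposal is correct and matches the paper's proof step for step: localized Euler convergence in probability, then the upgrade to $L^p$ via the grid-uniform $q$th moments of Proposition~\ref{prop:moment}, then Girsanov, Pinsker, TV--moment interpolation, and the triangle inequality. Your stated main obstacle is not actually one, because $D_t^P$ and $D_t$ both take values in $C\subset B(0,r_C)$, so the drift difference $s_t^P-s_t=(D_t^P-D_t)/t$ is bounded by $2r_C/\delta$ on $[\delta,T]\times\R^d$ and Novikov holds directly with no localization needed---this is exactly the observation the paper uses.
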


\begin{proof}
Since $D_t^P$ is bounded and $t\ge\delta$, the projected score has linear growth.  The stated local regularity gives localized Euler convergence under the usual Brownian-increment coupling,
\[
Y_0^{(K)}\longrightarrow Z^P_{T-\delta}
\qquad\text{in probability}.
\]
Apply Proposition~\ref{prop:moment} with any $q>p$.  The resulting grid-uniform $q$th-moment bound makes $\|Y_0^{(K)}\|^p$ uniformly integrable.  The limit $Z^P_{T-\delta}$ also has a finite $q$th moment.  Hence the coupled endpoint differences converge to zero in $L^p$, which is stronger than convergence in $W_p$.

For the comparison with the exact endpoint, both $D_t^P$ and $D_t$ take values in $C$, so $s_t^P-s_t=(D_t^P-D_t)/t$ is bounded on $[\delta,T]\times\mathbb R^d$.  Thus the Girsanov change of measure is valid, and evaluation under the exact reverse-time law gives
\[
\KL\!\left(
\Law(U_\cdot)\,\middle\|\,\Law(Z^P_\cdot)
\right)
\le
\frac12\mathcal E_P.
\]
Pinsker's inequality yields
\[
\TV\!\left(\Law(U_{T-\delta}),\Law(Z^P_{T-\delta})
\right)
\le \frac12\sqrt{\mathcal E_P}.
\]
Both endpoint laws have finite moments of every fixed order.  The standard TV--moment interpolation inequality then gives, for $q>p$,
\[
W_p\!\left(\Law(Z^P_{T-\delta}),p_\delta\right)
\le
C_{p,q}\,\mathcal E_P^{\frac12(\frac1p-\frac1q)}.
\]
The triangle inequality and the first part of the corollary prove \eqref{eq:projected-wp-certificate}.
\end{proof}

\section{Experiments}
\label{sec:experiments}
The experiments provide a constructive realization test.  We ask whether one fixed finite neural architecture can remain accurate under ordinary sampling, reproduce the amplification mechanism of the counterexample, and suppress this amplification after denoiser projection. We use an $8\times8$ one-channel point-mass variance-exploding model on $t\in[0.1,1]$, so the exact denoiser is zero.  Four family members share the same small DiT-style architecture and differ only in their parameters and remote target profiles.  Training and evaluation details are given in Appendix~\ref{app:experimental-protocol}.

\subsection{Unconditioned accuracy and post-entry amplification}

We first evaluate the selected checkpoints on unconditioned trajectories.  Table~\ref{tab:neural-summary} reports forward-marginal score MSE, paired trajectory RMSE, and the stress-test RMS amplification defined below.  The score MSE is at most $7.10\times10^{-3}$, while the trajectory RMSE ranges from $0.012$ to $0.024$.

\begin{table}[!ht]
\centering
\small
\begin{tabular}{cccc}
\toprule
Family & score MSE & path RMSE & RMS amplification $r_8/r_0$ \\
\midrule
$m_0$ & $1.94\!\times\!10^{-3}$ & $0.0119$ & $7.90$ \\
$m_1$ & $3.73\!\times\!10^{-3}$ & $0.0167$ & $9.78$ \\
$m_2$ & $3.89\!\times\!10^{-3}$ & $0.0171$ & $13.02$ \\
$m_3$ & $7.10\!\times\!10^{-3}$ & $0.0237$ & $14.69$ \\
\bottomrule
\end{tabular}
\caption{Unconditioned-trajectory accuracy and neural amplification for the selected checkpoints.}
\label{tab:neural-summary}
\end{table}

We use a stress-initialized protocol to isolate the dynamics after the numerical chain reaches the stress region.  This protocol does not estimate the natural entry probability.  Let $r_k=(\widehat{\E}|Y_k|^2)^{1/2}$ be the empirical coordinatewise root-mean-square (RMS) magnitude after step $k$.  Across the family, $r_8/r_0$ rises from $7.9$ to $14.7$, while the neural endpoint retains about $90\%$--$95\%$ of the target RMS magnitude.  Figure~\ref{fig:neural-stress-cascade} shows the stepwise growth and close tracking of the target recursion.

The trajectories also show a simple feedback mechanism: once the chain reaches a region where forward-marginal error gives little control, one inaccurate update can move later score queries farther from the reference path and amplify the deviation.  Here this produces repeated the amplification.

\begin{figure}[!t]
\centering
\includegraphics[width=0.86\linewidth]{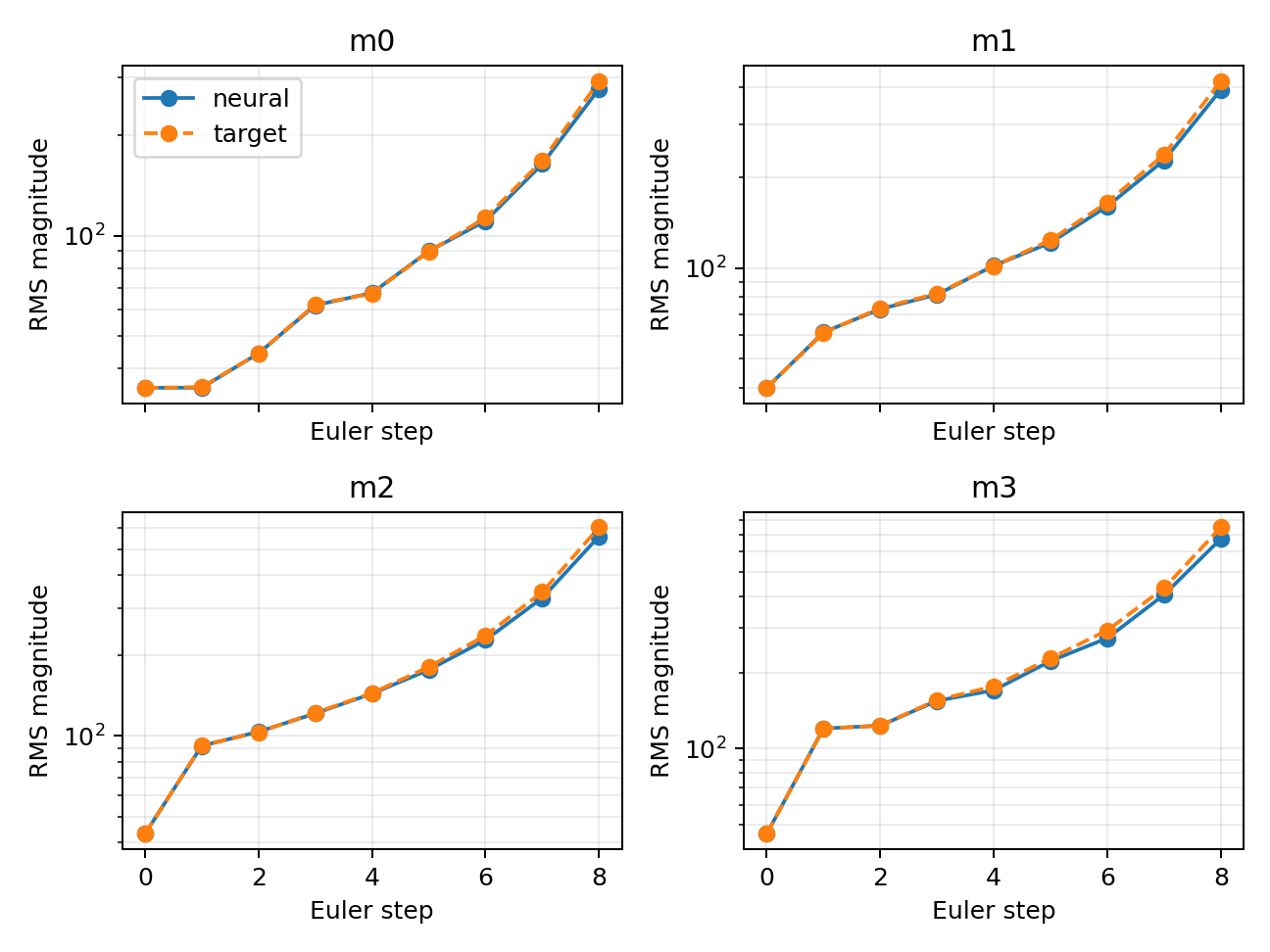}
\caption{Stepwise coordinatewise RMS magnitude $r_k=(\widehat{\E}|Y_k|^2)^{1/2}$ for the neural and paired target-profile chains.}
\label{fig:neural-stress-cascade}
\end{figure}

\subsection{Projection suppresses post-entry amplification}

We next apply denoiser projection with $C=[-1,1]$.  Raw and projected chains use identical stress initializations and bounded perturbations.  Projection reduces the final empirical second moment by factors between $1.13\times10^4$ and $3.33\times10^4$.  Figure~\ref{fig:neural-projection-stress} shows the stepwise separation: raw RMS magnitude grows, whereas projected RMS magnitude falls from $35$--$46$ initially to $2.6$--$3.7$.  Thus projection changes the post-entry dynamics and removes the amplification mechanism in this stress test.

\begin{figure}[H]
\centering
\includegraphics[width=0.86\linewidth]{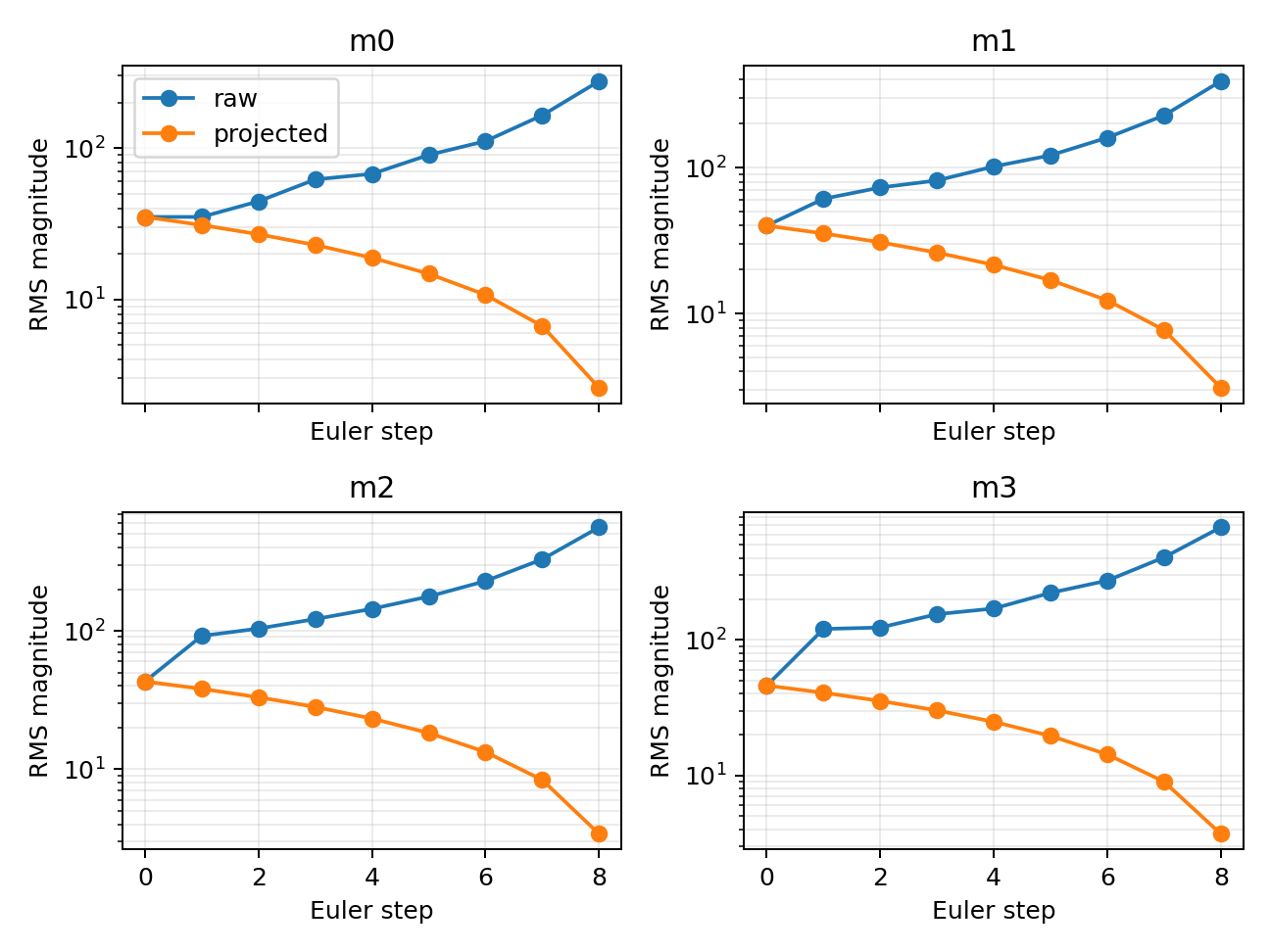}
\caption{Stepwise coordinatewise RMS magnitude for the paired raw and projected chains under the stress-initialized protocol.}
\label{fig:neural-projection-stress}
\end{figure}

\section{Discussion}

The results point to a gap between forward-marginal score accuracy and stability along numerical trajectories.  The problem is not only that the learned score may be inaccurate at states rarely visited under the forward marginals.  A numerical error also changes where the score is evaluated at later steps.  If the new state lies in a poorly controlled region, the next update can increase the deviation further.  Our counterexample turns this simple feedback mechanism into a superlinear cascade, but the same concern can arise more generally in learned reverse-time dynamics.  A useful certificate should therefore provide control that is uniform across the relevant learned family, control the score at states visited by the numerical sampler, or constrain the numerical transition directly.  Projection gives one such structural safeguard when a bounded convex set containing the data support is known.

A natural next step is to extend this framework to Schr\"odinger bridges and related learned transport dynamics.  In Schr\"odinger bridge problems, time-dependent drifts or controls are learned along bridge marginals and then simulated numerically, so a similar gap between bridge-marginal accuracy and solver-generated trajectories may arise.  Two questions are direct: whether small bridge-marginal error can coexist with stable continuous dynamics but unstable discretization, and whether bridge geometry or feasible-state constraints can yield projection-like stabilization.

\appendix

\section{Fixed-size neural realization and patchified embedding}
\label{app:fixed-neural-realization}

This appendix gives the network details used in Section~3.  The first two lemmas build the fixed-size remote-tail circuit and embed it into DiT; the final part handles positional embeddings for the patchified finite-dimensional lifting used in Corollary~\ref{cor:dit-s2-noncert}.

\begin{lemma}[Fixed-size realization of the remote cubic profile]
\label{lem:fixed-gelu-profile}
Fix $\lambda>0$.  There exist integers $L,W<\infty$, independent of $R,Q$ and the approximation tolerance, and a scalar network architecture $\mathcal A_{L,W}$ built from affine maps, GELU activations, residual connections, and a final fixed-dimensional LayerNorm--linear head, with the following property.  For every $1<R<Q$ and every $\zeta>0$, one can choose parameters $\theta$ in this same architecture so that the resulting denoiser $D_\theta$ is globally bounded and globally Lipschitz and
\begin{equation}
\sup_{|x|\le Q}
\left|D_\theta(x)+\lambda \Gamma_R(x)x^3\right|
\le \zeta.
\label{eq:fixed-gelu-profile}
\end{equation}
The depth $L$ and width $W$ do not depend on $R,Q$, or $\zeta$; only the parameter values do.
\end{lemma}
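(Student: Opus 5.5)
We build the network as a product of two fixed-size GELU subcircuits, one approximating the ramp gate $\Gamma_R$ and one approximating the cubic $x^3$. A saturating final stage then makes the output globally bounded and globally Lipschitz. The key point is that the scale parameters $R$, $Q$, and $\zeta$ enter only through weights and biases, never through depth or width.

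\emph{Gate.} First we approximate $\psi$, and hence $\Gamma_R$, with a fixed number of GELU units. Since $\mathrm{GELU}(\beta u)/\beta\to u_+$ uniformly on $\R$ as $\beta\to\infty$ (the error is $O(1/\beta)$), we realize
\[
\psi(u)\approx \beta^{-1}\bigl[\mathrm{GELU}(\beta u)-\mathrm{GELU}(\beta(u-1))\bigr].
\]
This gives
\[
\Gamma_R(x)\approx \psi_\beta(x-R)+\psi_\beta(-x-R)
\]
with four neurons. The biases $\pm R$ are free parameters, and the approximation is uniform on all of $\R$.

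\emph{Cubic.} Next we produce $x^3$ on $[-Q,Q]$ with a fixed number of units. We use the smoothness of GELU at a point $a$ where its second derivative is nonzero:
\[
\frac{\mathrm{GELU}(a+\epsilon v)+\mathrm{GELU}(a-\epsilon v)-2\,\mathrm{GELU}(a)}{\epsilon^2\,\mathrm{GELU}''(a)} = v^2+O(\epsilon^2 v^4).
\]
This is a fixed-width squaring gadget. Multiplication then follows from polarization, $xy=\tfrac14[(x+y)^2-(x-y)^2]$. Composing two squarings and one multiplication, we get $x^3$ and then the product $\lambda\,\Gamma\cdot x^3$ on the relevant compact range. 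The inputs are pre-scaled by $1/Q$ (and the output rescaled) so that all gadget arguments stay in a fixed bounded interval. The error then becomes at most $\zeta$ once $\epsilon$ and $\beta$ are chosen small and large, respectively, depending on $Q$, $\lambda$, and $\zeta$. Residual connections carry $x$ and the gate value forward through the layers, so depth and width stay fixed.

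\emph{Boundedness and the head.} Finally we compose with a saturating map that is the identity on $[-B,B]$, where $B=\lambda Q^3+1$, and bounded and Lipschitz globally. One option is $\mathrm{GELU}$-based clipping, $u\mapsto \mathrm{ReLU}$-like differences as in $\psi$. Alternatively we use the LayerNorm head: append a constant channel $\kappa$ and normalize the two-vector $(u,\kappa)$. With a large $\kappa$ the map is nearly linear on $[-B,B]$ and bounded globally, and the linear readout undoes the scaling.

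The resulting $D_\theta$ is a composition of globally Lipschitz maps (GELU has bounded derivative), so it is globally Lipschitz. Its output is bounded by the saturation.

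\emph{Main obstacle.} The delicate step is the LayerNorm head, or equivalently the simultaneous control of the cubic error on $[-Q,Q]$ and global boundedness. LayerNorm of a two-vector is close to a sign map, so we must verify that a constant-channel trick gives a near-identity on $[-B,B]$ with error below $\zeta/2$. If this fails, we will do the saturation with GELU ramps before the head and use the LayerNorm with enough extra constant channels that it acts as a fixed affine rescaling on the bounded range of values. Either way, the tolerance bookkeeping (choosing $\beta$, $\epsilon$, $\kappa$ in terms of $Q$, $\lambda$, $\zeta$) is the routine but lengthy part.
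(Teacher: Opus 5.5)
Your proposal is correct and follows the paper's construction essentially step for step: a four-unit GELU ramp $\beta^{-1}\mathrm{GELU}(\beta u)\to u_+$ for $\Gamma_R$ with $R$ carried only by the biases, a second-difference squaring gadget at a point where the second derivative of GELU is nonzero, polarization for products, and a final LayerNorm head with a constant channel used where it is nearly affine. The step you flag as delicate is settled in the paper by the small-signal version of your constant-channel idea: it normalizes $(z,1,-1)$, so that $H_{\mathrm{LN}}(z)=\kappa z+O(z^3)$ with $\kappa>0$, and uses $c\,H_{\mathrm{LN}}(y/(c\kappa))=y+O(|y|^3/c^2)$ on $|y|\le\lambda Q^3$; global boundedness is automatic because LayerNorm outputs are bounded, so your GELU-clipping fallback is unnecessary.
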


\begin{proof}
We give an explicit construction.  Let $\sigma$ be the tanh-approximated GELU activation used in the DiT MLP\@.  We need only two facts: $\sigma(z)-z_+$ is bounded on $\R$, and $\sigma''$ is nonzero somewhere.  The first follows from the tanh-GELU formula and its limits at $\pm\infty$.  The second follows because the activation is smooth and non-affine.

First,
\[
\mathcal R_k(u)=k^{-1}\sigma(ku)
\]
approximates $u_+$ uniformly on $\R$ as $k\to\infty$.  Indeed, with
\[
C_\sigma=\sup_{z\in\R}|\sigma(z)-z_+|<\infty,
\]
one has
\[
\sup_{u\in\R}|\mathcal R_k(u)-u_+|
\le \frac{C_\sigma}{k}.
\]
Hence a fixed four-unit GELU layer approximates the two-sided gate $\Gamma_R$ uniformly, with the location $R$ encoded only through biases.  Here $\Gamma_R$ is the two-sided ramp gate defined in Section~3.

Second, a fixed-size GELU module can approximate multiplication on any compact box.  Its weights may depend on the box and the tolerance.  Choose $a_\sigma$ with $\sigma''(a_\sigma)\neq0$ and set
\[
S_h(u)
=
\frac{\sigma(a_\sigma+hu)+\sigma(a_\sigma-hu)-2\sigma(a_\sigma)}{h^2\sigma''(a_\sigma)}.
\]
Taylor's theorem gives, for every $H<\infty$,
\[
\sup_{|u|\le H}|S_h(u)-u^2|
\le C_H h^2.
\]
Therefore
\[
\operatorname{Mult}_h(u,v)
=
\frac14\bigl(S_h(u+v)-S_h(u-v)\bigr)
\]
approximates $uv$ uniformly on $[-H,H]^2$.  This module uses a fixed number of GELU units; only $h$ and the affine weights vary with $H$ and the target accuracy.

A fixed finite composition of these modules computes, to arbitrary accuracy on $[-Q,Q]$, $x^2$, $x^3$, the gate $\Gamma_R(x)$, and finally $\Gamma_R(x)x^3$.  The number of multiplication modules is fixed.  Hence the depth and width do not depend on $R,Q$, or $\zeta$.

It remains to make the output globally bounded without changing the architecture size.  Let $\varepsilon_{\mathrm{LN}}>0$ be the fixed LayerNorm stabilizer and set
\[
q(z)=(z,1,-1),
\qquad
 H_{\mathrm{LN}}(z)=\bigl[\mathrm{LN}_{\varepsilon_{\mathrm{LN}}}(q(z))\bigr]_1.
\]
A direct calculation shows that $H_{\mathrm{LN}}$ is smooth, bounded, and strictly increasing.  Near zero, $H_{\mathrm{LN}}(z)=\kappa z+O(z^3)$ for some $\kappa>0$.  The target range is bounded by $|\lambda\Gamma_R(x)x^3|\le \lambda Q^3$.  Choose a large scale $c$ and feed the arithmetic circuit output, divided by $c\kappa$, into the fixed LayerNorm head $c H_{\mathrm{LN}}(\cdot)$.  Then
\[
c\,H_{\mathrm{LN}}\!\left(\frac{y}{c\kappa}\right)=y+O\!\left(\frac{|y|^3}{c^2}\right)
\]
uniformly for $|y|\le \lambda Q^3$.  Taking $c$ large and the internal arithmetic tolerances small yields \eqref{eq:fixed-gelu-profile}.  The final LayerNorm head is bounded, and every component of the finite network is globally Lipschitz for fixed parameters.  Hence $D_\theta$ is globally bounded and globally Lipschitz.
\end{proof}

\begin{lemma}[Embedding the fixed circuit into a fixed adaptive LayerNorm-Zero (adaLN-Zero) DiT]
\label{lem:fixed-dit-embedding}
Fix $\lambda>0$.  There exist integers $N,d,r<\infty$, independent of $R,Q$, and the approximation tolerance, and a scalar one-token adaLN-Zero DiT architecture with $N$ blocks, token dimension $d$, and MLP expansion width $r$, such that the following holds.  For every $1<R<Q$ and every $\zeta>0$, one can choose parameters in this same architecture so that its scalar denoiser output $D_\theta$ is globally bounded and globally Lipschitz and
\begin{equation}
\sup_{|x|\le Q}
\left|D_\theta(x)+\lambda\Gamma_R(x)x^3\right|
\le \zeta.
\label{eq:fixed-dit-profile}
\end{equation}
Only the parameter values vary with $R,Q$, and $\zeta$.  There are also constants $C,a,b<\infty$, depending only on the fixed architecture and $\lambda$, such that the parameters can be chosen with
\begin{equation}
\|D_\theta\|_\infty
\le C(1+Q)^a(1+\zeta^{-1})^b.
\label{eq:fixed-dit-poly-bound}
\end{equation}
\end{lemma}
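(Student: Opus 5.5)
The plan is to take the scalar network $\mathcal A_{L,W}$ of Lemma~\ref{lem:fixed-gelu-profile} and realize it layer by layer inside a one-token adaLN-Zero DiT. Since there is only one token, self-attention is trivial: softmax over a single key equals $1$, so the attention output is the value projection of the normalized token. We can silence it by setting the adaLN-Zero gate $\alpha_1=0$, or by zeroing the output projection. Each DiT block then reduces to
\[
x\mapsto x+\alpha_2\odot \mathrm{MLP}\bigl(\mathrm{LN}(x)(1+\gamma)+\beta\bigr),
\]
where the modulation parameters $(\gamma,\beta,\alpha_2)$ are constant vectors because we ignore the time and conditioning input. The token carries the scalar input $x$ in one coordinate and the intermediate quantities of the arithmetic circuit ($x^2$, $x^3$, the gate value, products) in further coordinates. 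It also carries a few auxiliary coordinates used to tame LayerNorm. The token dimension $d$, the expansion width $r$, and the block count $N$ are therefore fixed multiples of $W$ and $L$.

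\textbf{Step 1: neutralizing the internal LayerNorm.} This is the main obstacle, since the pre-MLP LayerNorm is nonlinear, whereas Lemma~\ref{lem:fixed-gelu-profile} uses affine maps between GELU layers. My first approach is to add two auxiliary coordinates with huge opposite constants $\pm B$, written into the token by the embedding bias. With $B$ large relative to all circuit values on $[-Q,Q]$ (which are bounded by powers of $Q$), the mean stays close to $0$ and the standard deviation stays close to the constant $\sqrt{2/d}\,B$. Hence $\mathrm{LN}(x)\approx x/(\sqrt{2/d}\,B)$ on the tube, with a relative error of order (circuit size$/B)^2$. The factor is undone by the modulation scale $1+\gamma$ or by the first MLP weight. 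Because the auxiliary coordinates are never updated (their residual rows are set to zero), the approximation is uniform along the whole depth. All errors are Lipschitz-propagated through the fixed-depth circuit, so choosing $B$ and the internal tolerances polynomially in $Q$ and $\zeta^{-1}$ yields~\eqref{eq:fixed-dit-profile}.

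\textbf{Step 2: computing the circuit and bounding the output.} Each GELU layer of $\mathcal A_{L,W}$ is placed in one MLP sublayer: the first linear map implements the affine map and the second writes the result into a fresh coordinate through the residual stream. Residual connections in $\mathcal A_{L,W}$ map to the DiT residual stream directly. The final LayerNorm--linear head of the lemma is matched by the DiT final layer, which is adaLN modulation followed by a linear map, applied to a token whose first coordinate holds $y/(c\kappa)$ and whose two auxiliary coordinates hold $\pm1$. Reading the output coordinate gives $cH_{\mathrm{LN}}$ as in Lemma~\ref{lem:fixed-gelu-profile}.

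Global boundedness of $D_\theta$ follows from the boundedness of LayerNorm outputs times the fixed scale $c$. Global Lipschitzness holds because each block is a composition of Lipschitz maps for fixed parameters: LayerNorm is Lipschitz thanks to its stabilizer $\varepsilon_{\mathrm{LN}}$. For~\eqref{eq:fixed-dit-poly-bound}, note that $\|D_\theta\|_\infty\le c\,\|H_{\mathrm{LN}}\|_\infty$. The scale $c$ needs only $(\lambda Q^3)^3/c^2\lesssim\zeta$, so $c\sim Q^{9/2}\zeta^{-1/2}$ suffices. The internal parameters $h$, $k$, and $B$ affect accuracy but not the sup-norm, so a bound $C(1+Q)^a(1+\zeta^{-1})^b$ with, for example, $a=9/2$ and $b=1/2$ follows.
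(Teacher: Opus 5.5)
Your construction is in substance the paper's. You gate off attention, make the modulation constant, and let a fixed anchor dominate the LayerNorm statistics so that LayerNorm is nearly linear on the compact tube. You then run one arithmetic stage of Lemma~\ref{lem:fixed-gelu-profile} per residual block and obtain boundedness from the bounded final LayerNorm. Your constants $\pm B$ are the paper's small-signal encoding $a+\epsilon Uz$ with $\epsilon\approx 1/B$, via $\mathrm{LN}_{\varepsilon_{\mathrm{LN}}}(Bv)=\mathrm{LN}_{\varepsilon_{\mathrm{LN}}/B^2}(v)$. The one difference is that your signal coordinates are not orthogonal to $\mathbf 1$, and that causes the first slip. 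For $h=(B,-B,z)$ the mean is $\bar h=\frac1d\sum_i z_i$. This is small relative to $B$ but of the same order as the signal, so on the workspace $\mathrm{LN}(h)=(z-\bar h\mathbf 1)/s(h)$, not $z/s(h)$. The fix is easy, since $\mathrm{LN}(h)_1+\mathrm{LN}(h)_2=-2\bar h/s(h)$ exactly and the first MLP matrix can add it back. Alternatively, avoid it as the paper does by choosing the signal subspace orthogonal to $\mathbf 1$ and to the centered anchor.

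The second slip is the final head. The DiT final layer normalizes the whole $d$-dimensional token. That token still holds the never-updated anchors $\pm B$ and every intermediate value ($x$, $x^2$, $x^3$, the gate), not the three-vector $(y/(c\kappa),1,-1)$ on which $H_{\mathrm{LN}}$ is defined. So reading one coordinate does not give $cH_{\mathrm{LN}}(y/(c\kappa))$. Suppose you read the output off linearly through the same anchor-dominated expansion, as the paper does with a readout weight of order $S_{\mathrm{out}}/\epsilon$. Then your weight is of order $B$, and the sup-norm bound you obtain is of order $B$. Hence your claim that $B$ affects accuracy but not $\|D_\theta\|_\infty$ does not follow, and neither do the exponents $a=9/2$, $b=1/2$. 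The bound \eqref{eq:fixed-dit-poly-bound} still holds with other exponents, because $B$ is polynomial in $Q$ and $\zeta^{-1}$. To keep an $O(c)$ bound, let the last stage write $B\,y/(c\kappa)$ into the output coordinate. By scale invariance the final LayerNorm then evaluates a $d$-dimensional analogue of $H_{\mathrm{LN}}$, still odd and equal to $\kappa_d w+O(w^3)$. It does so up to an $O(Q^3/B)$ perturbation from the remaining coordinates, which a further polynomial enlargement of $B$ absorbs.
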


\begin{proof}
We now show how a fixed stack of adaLN-Zero MLP blocks simulates the GELU arithmetic circuit from Lemma~\ref{lem:fixed-gelu-profile}.  In a standard adaLN-Zero block, the conditioning network produces shift, scale, and gate vectors for the attention and MLP branches.  Set the attention gates to zero.  Choose the affine modulation layers so that the MLP shift and scale are zero and the MLP gate is a fixed nonzero constant.  This is done by zeroing the relevant conditioning weights and using only their biases.  The timestep and class embeddings are then ignored.  After absorbing the constant gate into the output weights, each active block has the form
\[
h\longmapsto h+M_\ell(\mathrm{LN}_{\varepsilon_{\mathrm{LN}}}(h)),
\]
where $M_\ell$ is a two-layer affine--GELU--affine map and $\varepsilon_{\mathrm{LN}}>0$ is the fixed LayerNorm stabilizer.  This is exactly a parameter subfamily of the adaLN-Zero block used in DiT~\cite{peebles2023dit}.

The remaining issue is pre-normalization.  Let $d_{\mathrm{ws}}$ be the workspace dimension of the arithmetic circuit and choose $d\ge d_{\mathrm{ws}}+2$.  Fix an anchor $a\in\R^d$ with nonzero centered part.  Choose an isometric embedding $U:\R^{d_{\mathrm{ws}}}\to\R^d$ whose range is orthogonal to both the constant vector $\mathbf 1$ and the centered anchor $a-\bar a\mathbf 1$.  For a bounded workspace state $z$ and a small signal scale $\epsilon>0$, encode
\[
h=a+\epsilon Uz.
\]
A Taylor expansion of LayerNorm at $a$ gives, uniformly for $z$ in a fixed bounded box,
\begin{equation}
\mathrm{LN}_{\varepsilon_{\mathrm{LN}}}(a+\epsilon Uz)
=
\ell_0+c_a\epsilon Uz+O(\epsilon^2),
\qquad c_a>0,
\label{eq:ln-small-signal}
\end{equation}
where the orthogonality choice makes the derivative a scalar multiple of the identity on the signal subspace.

Fix $R,Q$, and a target accuracy.  Rescale the finitely many workspace coordinates so that every intermediate state on $[-Q,Q]$ lies in one bounded box.  These diagonal rescalings are absorbed into neighboring affine weights and do not change the circuit size.  Now consider one arithmetic stage
\[
z\longmapsto z+\Delta_\ell(z),
\]
where $\Delta_\ell$ is produced by one affine--GELU--affine module of the fixed circuit.  By \eqref{eq:ln-small-signal}, the first affine map of a DiT MLP can subtract $\ell_0$ and multiply by $(c_a\epsilon)^{-1}U^\top$.  This recovers $z$ up to $O(\epsilon)$ error on the relevant compact set.  Use the arithmetic-stage weights in the middle of the block, then multiply the final affine output by $\epsilon U$.  The result is
\[
a+\epsilon Uz
\longmapsto
 a+\epsilon U\bigl(z+\Delta_\ell(z)\bigr)+o(\epsilon)
\]
uniformly on that set.  Since the circuit has finitely many stages, choosing $\epsilon$ small enough makes the full fixed stack approximate its encoded workspace trajectory to any target accuracy.

The affine patch embedding initializes the anchor and the rescaled input coordinate.  Store the final circuit output in one bounded workspace coordinate after dividing by a parameter-dependent scale $S_{\mathrm{out}}\ge1+\lambda Q^3$.  Apply \eqref{eq:ln-small-signal} once more.  The final LayerNorm and a linear readout with weight of order $S_{\mathrm{out}}/\epsilon$ recover the unscaled scalar output with arbitrarily small uniform error on $[-Q,Q]$.  The number of blocks, token dimension, and MLP width are fixed by the arithmetic circuit.  They do not depend on $R,Q$, or $\zeta$.

For every fixed parameter choice, affine maps and GELU are globally Lipschitz, LayerNorm with positive stabilizing constant is globally Lipschitz, and the final LayerNorm has bounded coordinates.  Hence the scalar DiT output is globally bounded and globally Lipschitz.

Each arithmetic module can be chosen with parameters polynomial in $1+Q$ and $\zeta^{-1}$.  Since the circuit has fixed depth, composing the modules preserves polynomial dependence for the intermediate weights, workspace rescalings, and Lipschitz constants.  The small-signal LayerNorm simulation and final decoding require only additional polynomial rescaling.  Hence there are constants $C,a,b<\infty$ for which \eqref{eq:fixed-dit-poly-bound} holds.
\end{proof}

\subsection{Patchified embedding and positional control}

We now give the patchified finite-dimensional lifting used in Corollary~\ref{cor:dit-s2-noncert}.

We embed the scalar construction tokenwise.  Write the latent input as a fixed collection of $N_{\mathrm{tok}}<\infty$ patches and choose one scalar coordinate $u_i$ in each patch.  Set the attention gates to zero.  The remaining MLP branch uses shared weights across tokens.  The final linear head is also tokenwise, and unpatchification only rearranges patch outputs.

The only new issue is the fixed positional embedding.  Let $p_i\in\R^d$ be the positional vector of token $i$.  Let $e_x\in\R^{d_{\mathrm{ws}}}$ be the workspace basis vector for the input coordinate and set $v:=Ue_x\in\R^d$.  Choose positive scales $A_m$ and $\epsilon_m$.  The shared affine patch embedding is chosen so that, on the selected scalar coordinate,
\[
E_m(u_i)=A_m\bigl(a+\epsilon_m v u_i\bigr),
\]
with the same anchor $a$ and signal direction $U$ used in the small-signal construction.  The input to the first block is then
\[
A_m\bigl(a+\epsilon_m v u_i\bigr)+p_i.
\]
Factoring out the anchor scale gives the exact identity
\[
\mathrm{LN}_{\varepsilon_{\mathrm{LN}}}\!\left(
A_m(a+\epsilon_m v u_i)+p_i
\right)
=
\mathrm{LN}_{\varepsilon_{\mathrm{LN}}/A_m^2}\!\left(
 a+\epsilon_m v u_i+A_m^{-1}p_i
\right).
\]
The centered anchor has nonzero variance, so zero-stabilizer normalization is smooth near the compact encoded tube.  Since there are finitely many tokens,
\[
\mathrm{LN}_{\varepsilon_{\mathrm{LN}}/A_m^2}\!\left(
 a+\epsilon_m v u_i+A_m^{-1}p_i
\right)
=
\mathrm{LN}_{0}\!\left(a+\epsilon_m v u_i\right)
+O(A_m^{-1})
\]
uniformly for $|u_i|\le Q$ and over all tokens.  The same orthogonality choice as in Lemma~\ref{lem:fixed-dit-embedding} makes the derivative of $\mathrm{LN}_0$ a scalar multiple of the identity on the signal subspace.

The first affine map in an active MLP block decodes the workspace with gain of order $\epsilon_m^{-1}$, so the positional perturbation creates decoded error of order $(A_m\epsilon_m)^{-1}$.  Choose compatible scales with $\epsilon_m\to0$ and
\[
A_m\epsilon_m\longrightarrow\infty.
\]
Scaling each active residual output by $A_m\epsilon_m$ then gives
\[
A_m(a+\epsilon_m Uz)
\longmapsto
A_m\bigl(a+\epsilon_m U(z+\Delta_\ell(z))\bigr)
+o(A_m\epsilon_m),
\]
uniformly on the compact tube.  Thus the fixed positional vectors remain a controlled perturbation of the tokenwise circuit.

Choose the final tokenwise linear head so that only the selected scalar slot of each reconstructed patch is nonzero.  The resulting ideal denoiser is the product map
\[
F_m^{\otimes}(x)
=
\bigl(F_m(u_1),\ldots,F_m(u_{N_{\mathrm{tok}}})\bigr)
\]
on the selected patch coordinates, with all remaining output coordinates zero, up to an arbitrarily small compact-tube error.  The number of tokens and the patch dimension are fixed.  Thus the on-path $L^2$ budget and the integrated drift-error term in Girsanov's bound change only by a fixed dimensional factor.  Taking $R_m\to\infty$ still gives vanishing score error and vanishing path-space total variation distance.

For the numerical lower bound, place one selected patch coordinate in the moving terminal shell and impose the bounded-noise event only on that coordinate.  Attention is disabled and the tokenwise map is coordinate-separable.  The selected coordinate therefore follows the same perturbed scalar recursion as in Theorem~\ref{thm:fixed-dit-noncert}, while the other coordinates do not affect its update.  Scalar rare-event amplification then forces every positive moment of the full latent norm to diverge.  The weak-convergence argument is unchanged after replacing scalar maximal moments by finite-dimensional ones.

\section{Embedding into DiT-S/2}
\label{app:dit-s2-construction}

This appendix checks that the fixed circuit of Appendix~\ref{app:fixed-neural-realization} fits inside the concrete DiT-S/2 configuration of Peebles and Xie~\cite{peebles2023dit}.

\begin{proposition}[The fixed circuit fits inside DiT-S/2]
\label{prop:dit-s2-constructive}
Fix $\lambda>0$.  Consider the DiT-S/2 backbone with $12$ adaLN-Zero transformer blocks, token dimension $384$, MLP expansion width $1536$, six attention heads, and patch size $2$.  For every $1<R<Q$ and every $\zeta\in(0,1)$, one can choose parameters in this single fixed architecture so that, on a selected scalar coordinate $u_i$ of every patch token $i$,
\begin{equation}
\sup_{|u_i|\le Q}
\left|D_{\theta,i}(u_i)+\lambda\Gamma_R(u_i)u_i^3\right|
\le \zeta,
\label{eq:dit-s2-constructive-approx}
\end{equation}
while all remaining output coordinates are zero.  The attention branches are inactive, exactly three MLP residual blocks are active, and the remaining nine transformer blocks are identities.  The architecture size is independent of $R,Q$, and $\zeta$.
\end{proposition}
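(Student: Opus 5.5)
The plan is to build an explicit arithmetic circuit that realizes the target profile with a small, fixed number of MLP blocks. Then I would check that each block fits inside one DiT-S/2 MLP branch, and finally handle patchification and positional embeddings as in Appendix~\ref{app:fixed-neural-realization}.

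\textbf{Step 1: the attention-free block subfamily.} Exactly as in the proof of Lemma~\ref{lem:fixed-dit-embedding}, I set all attention gates to zero. I zero the conditioning weights, so the MLP shift and scale vanish and the MLP gate is a nonzero constant. For the nine unused blocks I also set the MLP gate to zero, so those blocks are exact identities; this is the adaLN-Zero initialization itself. Each active block then acts tokenwise as $h\mapsto h+M_\ell(\mathrm{LN}(h))$, where $M_\ell$ is an affine--GELU--affine map $\R^{384}\to\R^{1536}\to\R^{384}$. Since attention is off, all tokens evolve independently with shared weights. It therefore suffices to realize a scalar circuit on one token and invoke the patchified positional-control argument of Appendix~\ref{app:fixed-neural-realization}. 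That argument needs $A_m\epsilon_m\to\infty$ and a signal subspace orthogonal to $\mathbf 1$ and to the centered anchor. This leaves at most $382$ workspace dimensions, which is ample.

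\textbf{Step 2: a three-stage circuit.} I would organize the computation of $-\lambda\Gamma_R(u)u^3$ into three residual MLP stages, counting GELU units in each.
\begin{itemize}
\item[] \emph{Stage 1.} Write the gate $\Gamma_R(u)=\psi(u-R)+\psi(-u-R)$ using four rescaled GELU units $k^{-1}\sigma(k\cdot)$, which gives uniform error $O(1/k)$. In the same hidden layer, compute $u^2$ with the three-unit symmetric second-difference module $S_h$. Both results go into new workspace coordinates, while $u$ itself is kept.
\item[] \emph{Stage 2.} Compute $u^3=\operatorname{Mult}_h(u,u^2)$, which needs two $S_h$ modules, about six units. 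The residual structure lets me carry the gate forward unchanged.
\item[] \emph{Stage 3.} Compute $\operatorname{Mult}_h(\Gamma_R,u^3)$ and write $-\lambda$ times it, divided by $S_{\mathrm{out}}$, into the output coordinate.
\end{itemize}
Each stage uses far fewer than $1536$ hidden units. Every stage input is an affine function of workspace coordinates available after the previous LayerNorm decoding, so the decoding step of \eqref{eq:ln-small-signal} applies.

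\textbf{Step 3: error propagation, the main obstacle.} The delicate part is controlling errors across the three stages. The LayerNorm small-signal errors are $O(\epsilon)$, the GELU ramp and Taylor errors are $O(1/k)$ and $O(h^2)$, and the multiplication modules amplify upstream errors by Lipschitz constants polynomial in $Q$. I would first rescale every workspace coordinate to lie in $[-1,1]$ on $|u|\le Q$, absorbing the factors $Q$, $Q^2$, $Q^3$ into the adjacent affine weights. After that, the multiplication modules operate on a fixed box with a fixed constant $C_H$. I would then choose the stage tolerances backward from $\zeta$: stage $3$ gets $\zeta/3$, stage $2$ gets $\zeta/(3\cdot\mathrm{poly}(Q))$, and so on. Finally I take $\epsilon$ small, and $A_m\epsilon_m$ large for the positional perturbation. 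The final LayerNorm, followed by a linear head with weight of order $S_{\mathrm{out}}/\epsilon$, recovers the scalar output on the selected slot of each patch, and all other slots are zeroed by the head. This gives \eqref{eq:dit-s2-constructive-approx}. None of these choices change the depth ($3$ active blocks out of $12$), the width $384/1536$, the $6$ heads, or the patch size. The architecture is therefore independent of $R$, $Q$, and $\zeta$.
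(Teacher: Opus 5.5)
Your proposal is correct and follows the paper's proof essentially step for step. You zero the attention gates and the residual gates of the nine unused blocks, and run the same three-stage circuit through the small-signal LayerNorm workspace: gate and square in parallel, then the cube, then gate times cube with the factor $-\lambda$. You then check resources against token width $384$ and MLP width $1536$, and defer positional control to Appendix~\ref{app:fixed-neural-realization}.

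The discrepancies are cosmetic. The constant $-2\sigma(a_\sigma)$ in $S_h$ is absorbed into the output bias, so each $S_h$ needs two GELU units rather than three. This gives the paper's counts of six, four, and four hidden units per stage. Your backward tolerance allocation spells out what the paper compresses into ``its scale choices.''
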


\begin{proof}
Appendix~\ref{app:fixed-neural-realization} realizes the target profile with a fixed tokenwise circuit consisting of one gate module, one square module, and two multiplication modules, all simulated through a fixed small-signal LayerNorm workspace.  Apply that construction tokenwise with shared weights.  The first active MLP block computes the gate and square terms in parallel, the second forms the cubic term, and the third multiplies by the gate and applies the factor $-\lambda$.  Closing all attention gates makes the computation coordinate-separable, and setting both residual gates to zero makes the remaining nine blocks exact identities.

The resource check is fixed.  The first active stage needs at most six GELU hidden units and the next two need four each, well below the available MLP width $1536$.  The workspace and anchor construction of Lemma~\ref{lem:fixed-dit-embedding} has fixed dimension strictly below the token width $384$, and the patchified embedding argument of Appendix~\ref{app:fixed-neural-realization} controls the positional perturbation.  Its scale choices make the compact-tube error at most $\zeta$ without changing any architecture dimension, which proves \eqref{eq:dit-s2-constructive-approx}.
\end{proof}

\section{Experimental protocol}
\label{app:experimental-protocol}

The neural runs use one fixed training seed (1729).  Unconditioned evaluations use common evaluation seeds across the family.  In the stress experiments, terminal states and perturbations are paired within each neural--target or raw--projected comparison.  Summaries on unconditioned trajectories use $1024$ trajectories per grid and $256$ paired paths for the fine-grid coupling diagnostic.  Stress and projection experiments use $512$ paired trajectories.  The reported values are Monte Carlo point estimates conditional on this training run and the selected checkpoints.  Path RMSE is evaluated on a $512$-step grid.  The stress-initialized protocol uses eight Euler-form updates with standardized perturbations sampled uniformly from $[-0.35,0.35]$.  It is designed to isolate post-entry dynamics and does not estimate the natural probability of reaching the stress region.

The base model is an $8\times8$ one-channel point-mass diffusion model,
\[
X_0=0,
\qquad
X_t=\sqrt t\,g,
\qquad
D_t\equiv0,
\qquad
t\in[0.1,1].
\]
All learned denoisers use the same small DiT-style architecture: patch size $2$, hidden dimension $64$, and three adaptive-LayerNorm/GELU residual MLP blocks, for $104{,}388$ trainable parameters.  Family members differ only in their parameter values and the remote target profile used for training.

Coordinatewise, the target is a smoothly gated cubic with tanh saturation,
\[
D_m^{\mathrm{target}}(x)
=
L_m\tanh\!\left(
-\frac{\lambda\,\gamma_{R_m}(x)x|x|^2}{L_m}
\right),
\qquad \lambda=0.02,
\]
where $\gamma_R(x)=q((|x|-R)/0.5)$, $q(u)=\bar u^2(3-2\bar u)$, and $\bar u=\min\{1,\max\{0,u\}\}$.  Thus $\gamma_R$ rises from $0$ to $1$ over a transition width of $0.5$.  The four configurations are
\[
(R_m,L_m,y_m^{\mathrm{stress}})
=
(2.5,700,35),
(3,1000,40),
(3.5,1450,43),
(4,1800,46),
\]
where $y_m^{\mathrm{stress}}$ is the center of the terminal stress initialization in the post-entry stress experiment.

Training combines four terms: on-path score loss, safe-region denoiser loss, cascade-tube regression, and transition consistency for the deterministic Euler relation.  After joint training, a common calibration rule filters checkpoints by fixed upper bounds on cascade-tube and transition-consistency errors.  Among the surviving checkpoints, it selects the checkpoint with the smallest on-path score error.  All reported main-text comparisons use these selected checkpoints and paired random draws across the family.

For Table~\ref{tab:neural-summary}, score MSE is evaluated after checkpoint selection with fresh randomness (evaluation seed 10729) and is the Monte Carlo mean of $|\widehat s_t(X_t)-s_t(X_t)|^2$ for $t\sim\operatorname{Unif}[0.1,1]$ and $X_t\sim N(0,tI)$.  Path RMSE uses paired fine-grid trajectories: for each pair, we compute the coordinate-averaged squared learned--exact error at every grid point, take the maximum over the path, average over trajectories, and then take the square root.  In the stress experiments, $r_k=(\widehat{\E}|Y_k|^2)^{1/2}$ is the empirical coordinatewise RMS magnitude at Euler step $k$.  The amplification factor is $r_8/r_0$, the neural--target fidelity statistic is $r_8^{\mathrm{neural}}/r_8^{\mathrm{target}}$, and the projection reduction factor is the ratio of the final raw and projected empirical second moments.


\begin{thebibliography}{99}

\bibitem{song2021sde}
Y. Song, J. Sohl-Dickstein, D. P. Kingma, A. Kumar, S. Ermon, and B. Poole.
\newblock Score-Based Generative Modeling through Stochastic Differential Equations.
\newblock \emph{International Conference on Learning Representations}, 2021.

\bibitem{chen2022minimal}
S. Chen, S. Chewi, J. Li, Y. Li, A. Salim, and A. Zhang.
\newblock Sampling is as easy as learning the score: theory for diffusion models with minimal data assumptions.
\newblock \emph{International Conference on Learning Representations}, 2023.

\bibitem{lee2022general}
H. Lee, J. Lu, and Y. Tan.
\newblock Convergence of Score-Based Generative Modeling for General Data Distributions.
\newblock \emph{Proceedings of the 34th International Conference on Algorithmic Learning Theory}, PMLR 201:946--985, 2023.

\bibitem{benton2024linear}
J. Benton, V. De Bortoli, A. Doucet, and G. Deligiannidis.
\newblock Nearly $d$-Linear Convergence Bounds for Diffusion Models via Stochastic Localization.
\newblock \emph{International Conference on Learning Representations}, 2024.

\bibitem{gao2023wasserstein}
X. Gao, H. M. Nguyen, and L. Zhu.
\newblock Wasserstein Convergence Guarantees for a General Class of Score-Based Generative Models.
\newblock \emph{Journal of Machine Learning Research}, 26(43):1--54, 2025.

\bibitem{brunosabanis2025}
S. Bruno and S. Sabanis.
\newblock Wasserstein Convergence of Score-based Generative Models under Semiconvexity and Discontinuous Gradients.
\newblock \emph{Transactions on Machine Learning Research}, 2025.

\bibitem{cao2026robustness}
D. Y. Cao, A. Y. Chen, K. Sridharan, and Y. Wu.
\newblock On the Robustness of Langevin Dynamics to Score Function Error.
\newblock arXiv:2603.11319, 2026.

\bibitem{hjk2011}
M. Hutzenthaler, A. Jentzen, and P. E. Kloeden.
\newblock Strong and weak divergence in finite time of Euler's method for stochastic differential equations with non-globally Lipschitz continuous coefficients.
\newblock \emph{Proceedings of the Royal Society A}, 467:1563--1576, 2011.

\bibitem{khelifa2026gradients}
N. B. Khelifa, R. E. Turner, and R. Venkataramanan.
\newblock Diffusion Models Observe Only Gradients: A Geometric Perspective on Score Matching Errors.
\newblock arXiv:2606.06179, 2026.

\bibitem{tangzhao2024contractive}
W. Tang and H. Zhao.
\newblock Contractive Diffusion Probabilistic Models.
\newblock arXiv:2401.13115, 2024.

\bibitem{conforti2025kl}
G. Conforti, A. Durmus, and M. Gentiloni Silveri.
\newblock KL Convergence Guarantees for Score Diffusion Models under Minimal Data Assumptions.
\newblock \emph{SIAM Journal on Mathematics of Data Science}, 7(1):86--109, 2025.

\bibitem{peebles2023dit}
W. Peebles and S. Xie.
\newblock Scalable Diffusion Models with Transformers.
\newblock \emph{Proceedings of the IEEE/CVF International Conference on Computer Vision}, 2023.

\bibitem{saharia2022imagen}
C. Saharia, W. Chan, S. Saxena, L. Li, J. Whang, E. Denton, K. Ghasemipour, R. Gontijo Lopes, B. Karagol Ayan, T. Salimans, J. Ho, D. J. Fleet, and M. Norouzi.
\newblock Photorealistic Text-to-Image Diffusion Models with Deep Language Understanding.
\newblock arXiv:2205.11487, 2022.

\bibitem{lou2023reflected}
A. Lou and S. Ermon.
\newblock Reflected Diffusion Models.
\newblock \emph{Proceedings of the 40th International Conference on Machine Learning}, PMLR 202:22675--22701, 2023.

\bibitem{highammaostuart2002}
D. J. Higham, X. Mao, and A. M. Stuart.
\newblock Strong convergence of Euler-type methods for nonlinear stochastic differential equations.
\newblock \emph{SIAM Journal on Numerical Analysis}, 40(3):1041--1063, 2002.

\end{thebibliography}
\end{document}